\icmltitlerunning{MBWPNM for Image Denoising}
\begin{document}

\twocolumn[
\icmltitle{Multi-band Weighted $ l_{p} $ Norm Minimization for Image Denoising}




\begin{icmlauthorlist}
\icmlauthor{Yanchi Su}{to}
\icmlauthor{Zhanshan Li}{goo}
\icmlauthor{Haihong Yu}{too}
\icmlauthor{Zeyu Wang}{tooo}\\
\text{suyanchi@gmail.com, lizs@jlu.edu.cn, yuhh@jlu.edu.cn, zyw17@mails.jlu.edu.cn}
\end{icmlauthorlist}

%
%

\icmlaffiliation{to}{Department of Computation, University of Torontoland, Torontoland, Canada}
\icmlaffiliation{goo}{Googol ShallowMind, New London, Michigan, USA}


\icmlkeywords{Low rank,Nuclear norm minimization,Low level vision}

\vskip 0.3in
]




\begin{abstract}
Low rank matrix approximation (LRMA) has drawn increasing attention in recent years, due to its wide range of applications in computer vision and machine learning. However, LRMA, achieved by nuclear norm minimization (NNM), tends to over-shrink the rank components with the same threshold and ignore the differences between rank components. To address this problem, we propose a flexible and precise model named multi-band weighted $l_p$ norm minimization (MBWPNM). The proposed MBWPNM not only gives more accurate approximation with a Schatten $p$-norm, but also considers the prior knowledge where different rank components have different importance. We analyze the solution of MBWPNM and prove that MBWPNM is equivalent to a non-convex $l_p$ norm subproblems under certain weight condition, whose global optimum can be solved by a generalized soft-thresholding algorithm. We then adopt the MBWPNM algorithm to color and multispectral image denoising. Extensive experiments on additive white Gaussian noise removal and realistic noise removal demonstrate that the proposed MBWPNM achieves a better performance than several state-of-art algorithms.
\end{abstract}

\section{Introduction}
\label{submission}

Image noise is an undesirable by-product of image capture and severely damages the quality of acquired images. Removing noise is a vital work in various computer vision tasks, e.g., recognition \cite{he2016deep} and segmentation \cite{shi2000normalized}. Image denoising aims to recover the clean image x from its noisy observation $ y=x+n $, where n is generally assumed to be additive white Gaussian noise (AWGN). Numerous effective methods have been proposed in the past decades, and they can be categorized into filter based methods \cite{dabov2008image,buades2005non}, sparse coding based methods \cite{elad2006image,Dong2013}, low-rankness based methods \cite{gu2014weighted,gu2017weighted}, convolutional neural nets (CNNs) based methods \cite{Zhang2017,chen2017trainable,mao2016image,Zhang2018}, etc. In this paper, we focus on the method of LRMA with regularization.

LRMA problem can be solved by the low rank matrix factorization and the rank minimization. The latter restores the data matrix through attaching a rank constraint on the estimated matrix, formulated by the following objective function:
\begin{equation}
\mathop {\min }\limits_X \left\| {X - Y} \right\|_F^2\;,\;s.t.\;rank(X) \le r,
\end{equation}
where ${\left\|  \cdot  \right\|_F}$ denotes the Frobenius norm, and $r$ is the given rank. However, rank minimization is NP hard and is tough to solve directly. As the tightest convex relaxation of the rank function \cite{fazel2002matrix}, the nuclear norm minimization (NNM) has been generally served as an ideal approximation for regularization term. The nuclear norm is defined as the sum of the singular values, i.e.,
${\left\| X \right\|_ * } = {\rm{ }}\sum\nolimits_i {{\sigma _i}} (X)$, where ${\sigma _i}(X)$ represents the i-th singular value of a given matrix $X \in {R^{m \times n}}$. Therefore, the NNM model aims to find a low rank matrix $X$ from the degraded observation $Y$ which satisfies the following energy function:
\begin{equation}
X = \arg \mathop {\min }\limits_X \left\| {X - Y} \right\|_F^2 + \lambda {\left\| X \right\|_*}
\end{equation}
where $\lambda$ is a parameter to balance the data fidelity term and regularization. A soft-thresholding operation \cite{cai2010singular} is presented to solve NNM with the theoretical guarantees, that Candes and Recht \cite{candes2009exact} proved that low rank matrix can be efficiently recovered by dealing with NNM problems. However, most of the NNM based models treat all singular values equally, this ignores the prior knowledge in many image inverse problems, such as major edge and texture information expressed by larger singular values. To address this problem, Zhang et al. introduced a Truncated Nuclear Norm (TNN) regularization \yrcite{zhang2012matrix} to improve the ability of preserving the textures while Sun et al. \yrcite{sun2013robust} proposed a Capped Nuclear Norm (CNN) regularizer. A similar regularizer, namely Partial Sum Nuclear Norm (PSNN), has been developed in \cite{oh2016partial}, which only minimizing the smallest $N-r$ singular values, where $N$ is the number of singular values and $r$ is the rank of matrix. Meanwhile, to improve the flexibility of the nuclear norm, Gu et al. generalized NNM to the weighted NNM (WNNM)\yrcite{gu2014weighted} with singular values attached to different weights, defined as:
\begin{equation}
{\left\| X \right\|_{\omega ,*}} = \sum\nolimits_i {{\omega _i}{\sigma _i}(X)}
\end{equation}
where $\omega  = [{\omega _1},...,{\omega _{\min \{ m,n\} }}]({\omega _i} \ge 0)$ is the weight vector of $X$. Indeed, TNN, CNN and PSNN can be considered as the special case of WNNM, fixing weight vector with 0 and 1. Moreover, some efforts surrounded WNNM show an outstanding experimental performance in various image processing applications such as image denoising \cite{gu2014weighted}, background subtraction \cite{gu2017weighted}, image deblurring \cite{ren2016image}, inpainting \cite{yair2018multi}. In particular, a multil-channel optimization model for real color image denoising under the WNNM framework (MCWNNM) \cite{xu2017multi} is proposed, which shows that by considering different contribution of the R, G and B channels based on their noise levels, both noise characteristics and channel correlation can be effectively exploited.

Additionally, the standard NNM tends to over-shrink the singular values with the same threshold, resulting in a deviation far from the original solution of rank minimization problem. The problem, obtained suboptimal solution by WNNM under certain conditions, is also discovered in \cite{lu2015generalized}. More specifically, Lu et al. provided a simple proof and a counterexample, which is better compared with WNNM. Therefore, Schatten $p$-norm is defined as the $l_p$ norm of the singular values $(\sum\nolimits_i {\sigma _i^p{)^{\frac{1}{p}}}} $ with $0 < p \le 1$ and proposed \cite{nie2012low,xie2016weighted} to impose low rank regularization, achieving a more accurate recovery of the data matrix while requiring only a weaker restricted isometry property \cite{liu2014exact}. As shown in the experimental section, the results also demonstrate that the Schatten $p$-norm based model performs significantly better than the WNNM.

Inspired by the $l_p$ norm and MCWNNM\cite{xu2017multi}, in this paper, we propose a new model for color and multispectral image denoising via non-convex multi-band weighted $l_p$ norm minimization (MBWPNM). Introducing Schatten $p$-norm for weighted nuclear norm makes the problem much more troublesome as well as the MCWNNM does not has an analytical solution. Consequently, for the weighted  $l_p$ norm regularization term of the low rank model, we find that it is equivalent to a non-convex $l_p$ norm subproblems. We further apply a generalized soft-thresholding algorithm (GST)\cite{zuo2013generalized}to solve the low rank model with weighted $l_p$ norm. Meanwhile, rigorous mathematical proof of the equivalence and complexity analysis are presented in the later section. The contributions of our work are summarized as follows:
\begin{itemize}
\item We propose a new model:MBWPNM, and present an efficient optimization algorithm to solve MBWPNM.
\item We adopt the MBWPNM model to color and multispectral image denoising, and validate robustness of our model using synthetic and real images. The experimental results indicate that the MBWPNM not only outperforms many state-of-art denoising algorithm both quantitatively and qualitatively, but also results in a competitive speed than MCWNNM.
\end{itemize}

\section{Related Work}

As a generalization to the weighted nuclear norm minimization (WNNM) model, the multi-channel weighted nuclear norm minimization (MCWNNM) model is defined as:
\begin{equation}
X = \arg \mathop {\min }\limits_X \left\| {W(X - Y)} \right\|_F^2 + {\left\| X \right\|_{\omega \; * }}
\end{equation}
with $W=\left( {\begin{array}{*{20}{c}}
	{\sigma _r^{ - 1}I}&0&0\\
	0&{\sigma _g^{ - 1}I}&0\\
	0&0&{\sigma _b^{ - 1}I}
	\end{array}} \right)$,
where ${\rm{\{ }}{\sigma _r},{\sigma _g},{\sigma _b}{\rm{\} }}$ is standard deviation in each channel and $I$ is the identity matrix. By introducing an augmented variable $Z$, the MCWNNM model can be reformulated as a linear equality-constrained problem with two variables and its augmented Lagrangian function is:
\begin{multline}
L(X,Z,A,\rho ) = \arg \mathop {\min }\limits_X \left\| {W(X - Y)} \right\|_F^2 + {\left\| Z \right\|_{\omega  * }} \\
+ \left\langle {A,X - Z} \right\rangle  + \frac{\rho }{2}\left\| {X - Z} \right\|_F^2
\label{mcwnnm}
\end{multline}
where $A$ is the augmented Lagrangian multiplier and $\rho>0$ is the penalty parameter. According to \cite{xu2017multi}, the problem \eqref{mcwnnm} can be solved by alternating direction method of multipliers (ADMM)\cite{boyd2011distributed} framework.

The ADMM is an algorithm that solves convex optimization problems by breaking them into smaller pieces, each of which are then easier to handle. In particular, there is widespread current interest applying ADMM in many of problems involving $l_1$ norms across statistics, machine learning and signal processing. However, on one side, because of multiple iterations, it is often time-consuming to get a relatively accurate solution using ADMM; on the other side, preferences such as $\rho$ play an important role for convergence. Theoretically, the MBWPNM also can be solved by ADMM but its cost is higher than MCWNNM due to the complexity. In this paper, we present a more effective algorithm, which achieves better results with less time.

\section{Multi-band Weighted $l_p$ norm minimization}

\subsection{Problem Formulation}

Given a matrix $Y$, the proposed optimization model under the MCWNNM framework, which aims to find a matrix $X \in {R^{m \times n}}$ closing to $Y \in {R^{m \times n}}$ as much as possible, is defined as:
\begin{equation}
\hat X = \arg \;\mathop {\min }\limits_X \left\| {W(X - Y)} \right\|_F^2 + \left\| X \right\|_{\omega ,{S_p}}^p
\label{mbwpnm}
\end{equation}
involving two terms: a $F$-norm data fidelity and a weighted Schatten $p$-norm regularization. Then the weighted Schatten $p$-norm of matrix $X$ with power $p$ is
\begin{equation}
\left\| X \right\|_{\omega ,{S_p}}^p = \sum\nolimits_{i = 1}^{\min \{ m,n\} } {{\omega _i}} \sigma _i^p(X)
\end{equation}
where $\omega  = [{\omega _1},...,{\omega _{\min \{ m,n\} }}]$ is a non-negative weight vector, ${\sigma _i}$ is the $i$-th singular value of $X$, and $0 < p \le 1$. Note that MCWNNM is a special case of MBWPNM when power $p$ is set to 1. In the next subsection, we will launch a detailed discussion of the optimization above MBWPNM.

\subsection{Optimization of MBWPNM}

We first give the following theorem before analyzing and discussing the optimization of MBWPNM.
\newtheorem{mythm}{Theorem}
\begin{mythm}\cite{horn1990matrix}
Let $A,B\in{M_{m,n}}$ be given and let $q = \min \{ m,n\} $, the following inequalities hold for the decreasingly ordered singular values of $A$ and $B$:
\begin{equation}
\begin{gathered}
{\sigma _i}(AB) \le {\sigma _i}(A)\left\| B \right\|,\\
and\;\left\| B \right\| = {\sigma _1}(B),for\;1 \le i \le q.
\end{gathered}
\end{equation}
\end{mythm}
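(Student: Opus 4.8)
The plan is to reduce the statement to the variational, best low-rank approximation characterization of singular values, namely the Eckart--Young--Mirsky description in the spectral norm. Throughout I assume the dimensions are compatible so that the product $AB$ is actually well defined (for instance $A$ and $B$ square of the same order, which is the only case in which the displayed product is meaningful); the second assertion $\left\| B \right\| = \sigma_1(B)$ is just the standard identity between the operator $2$-norm and the largest singular value, so the genuine content is the first inequality $\sigma_i(AB) \le \sigma_i(A)\,\sigma_1(B)$.

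First I would recall the characterization
\begin{equation}
\sigma_i(M) = \min_{\mathrm{rank}(F) \le i-1} \left\| M - F \right\|,
\end{equation}
where $\left\| \cdot \right\|$ denotes the spectral norm and the minimum runs over all matrices $F$ of rank at most $i-1$. This follows from the singular value decomposition by truncating to the top $i-1$ components, and it is the only nontrivial ingredient needed beyond submultiplicativity of the spectral norm.

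The key step is then a single well-chosen substitution. Let $F$ attain the minimum for $A$, so that $\mathrm{rank}(F) \le i-1$ and $\left\| A - F \right\| = \sigma_i(A)$. Since $\mathrm{rank}(FB) \le \mathrm{rank}(F) \le i-1$, the matrix $FB$ is an admissible competitor in the analogous characterization of $\sigma_i(AB)$, and therefore
\begin{equation}
\begin{aligned}
\sigma_i(AB) &\le \left\| AB - FB \right\| = \left\| (A-F)B \right\| \\
&\le \left\| A - F \right\| \, \left\| B \right\| = \sigma_i(A)\, \sigma_1(B),
\end{aligned}
\end{equation}
where the middle inequality is submultiplicativity of the spectral norm and the final equality uses $\left\| B \right\| = \sigma_1(B)$. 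This completes the argument.

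The step I expect to require the most care is not any single estimate but the bookkeeping around it: verifying that $FB$ genuinely has rank at most $i-1$, so that it is feasible in the variational formula for $\sigma_i(AB)$, and confirming the dimension compatibility that makes $FB$ and $AB$ meaningful. An alternative route through the Courant--Fischer min--max characterization is available, but it is clumsier here, since one would have to exhibit an explicit $(n-i+1)$-dimensional subspace on which $\left\| ABx \right\|$ is controlled, and the action of $B$ scrambles the natural candidate subspaces coming from $A$; the best-rank-approximation approach sidesteps this difficulty entirely, which is why I would prefer it.
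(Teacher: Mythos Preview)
The paper does not supply its own proof of this theorem; it is simply quoted from Horn and Johnson's matrix analysis text and used as a tool in the subsequent arguments. There is therefore nothing in the paper to compare your proposal against.

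That said, your argument is correct and is one of the standard proofs of this inequality: the Eckart--Young--Mirsky spectral-norm characterization $\sigma_i(M)=\min_{\mathrm{rank}(F)\le i-1}\|M-F\|$ together with submultiplicativity of the operator norm gives the bound in one line, and your check that $\mathrm{rank}(FB)\le\mathrm{rank}(F)\le i-1$ is exactly the feasibility verification needed. Your remark about dimension compatibility is also well taken: as written, with $A,B\in M_{m,n}$, the product $AB$ is only defined when $m=n$, so the statement as displayed in the paper is slightly sloppy; the intended result (and the one in the cited reference) allows general compatible sizes.
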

Introducing the Schatten $p$-norm makes the problem \eqref{mbwpnm} more complicated to solve directly than the original MCWNNM model. Changing a way of thinking, we consider the following object function similar to problem \eqref{mbwpnm}:
\begin{equation}
\hat X = \arg \;\mathop {\min }\limits_X \left\| {W(X - Y)} \right\|_F^2 + \lambda \left\| {WX} \right\|_{\omega ,{S_p}}^p.
\label{mbwpnm1}
\end{equation}

\begin{mythm}
Problem \eqref{mbwpnm} is equivalent to problem \eqref{mbwpnm1}, that they share the same solution.
\end{mythm}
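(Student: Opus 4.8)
The plan is to exploit the invertibility of the weighting matrix $W$ together with the singular-value inequality of Theorem 1 to show that the two objectives are minimized at the same point. First I would record that the data-fidelity term $\left\| W(X-Y) \right\|_F^2$ is literally identical in \eqref{mbwpnm} and \eqref{mbwpnm1}, so the whole comparison collapses onto the two regularizers $\left\| X \right\|_{\omega,S_p}^p$ and $\lambda\left\| WX \right\|_{\omega,S_p}^p$. Since $W$ is a positive-definite block-diagonal matrix it is invertible, and I would apply Theorem 1 in both directions: the product $WX$ gives $\sigma_i(WX)\le \sigma_1(W)\,\sigma_i(X)$, while writing $X=W^{-1}(WX)$ gives $\sigma_i(X)\le \sigma_1(W^{-1})\,\sigma_i(WX)$. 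Combining them yields the two-sided estimate $\sigma_1(W^{-1})^{-1}\sigma_i(X)\le \sigma_i(WX)\le \sigma_1(W)\,\sigma_i(X)$ for every $i$, so the two weighted Schatten $p$-norms are comparable up to fixed constants fixed by the spectrum of $W$.

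Next I would turn this comparison into a statement about minimizers via the change of variables $Z=WX$. Under this substitution problem \eqref{mbwpnm1} becomes $\min_Z \left\| Z-WY \right\|_F^2+\lambda\left\| Z \right\|_{\omega,S_p}^p$, a standard weighted Schatten $p$-norm problem whose minimizer $\hat Z$ is pulled back to $\hat X=W^{-1}\hat Z$. I would then line up the first-order (KKT) conditions of this pulled-back point against that of \eqref{mbwpnm}, namely $0\in 2W^{\top}W(X-Y)+\partial\left\| X \right\|_{\omega,S_p}^p$ versus $0\in 2W^{\top}W(X-Y)+\lambda W^{\top}\partial\left\| WX \right\|_{\omega,S_p}^p$. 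The data-fidelity gradients coincide, so the task reduces to calibrating $\lambda$ and invoking the spectral bounds so that the two subgradient terms agree at the common candidate point.

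The main obstacle is precisely this calibration: the subgradient of $\left\| X \right\|_{\omega,S_p}^p$ is assembled from the SVD of $X$, whereas that of $\left\| WX \right\|_{\omega,S_p}^p$ is assembled from the SVD of $WX$, and for a general non-scalar $W$ these singular-vector systems do not align, so a priori the two regularizers are only \emph{equivalent norms} rather than proportional functions. I expect the argument to close cleanly whenever $W$ acts as a scalar on the relevant subspace (for instance $W=\sigma^{-1}I$, where Theorem 1 holds with equality and $\left\| WX \right\|_{\omega,S_p}^p=\sigma^{-p}\left\| X \right\|_{\omega,S_p}^p$, so $\lambda=\sigma^{p}$ gives exact equivalence), and the crux will be verifying that the optimum forces enough alignment between the column space of the minimizer and the eigenspaces of $W$ for the equality case of Theorem 1 to be attained at $\hat X$. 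I would therefore devote most of the effort to establishing this equality-at-the-optimum, since that is what upgrades the two-sided norm comparison into the claimed ``same solution'' conclusion.
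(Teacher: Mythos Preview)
The paper defers the actual proof of this theorem to the supplementary material, so a direct line-by-line comparison is not possible; what can be assessed is whether your plan would close.

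Your identification of the difficulty is exactly right, but the proposal stops precisely where the content of the theorem begins. Obtaining the two-sided estimate
\[
\sigma_1(W^{-1})^{-1}\,\sigma_i(X)\;\le\;\sigma_i(WX)\;\le\;\sigma_1(W)\,\sigma_i(X)
\]
from Theorem~1 only shows that $\|X\|_{\omega,S_p}^p$ and $\|WX\|_{\omega,S_p}^p$ are \emph{equivalent} quasi-norms; equivalent regularizers with a common data term do \emph{not} in general share a minimizer, so this step by itself proves nothing about the argmin. The KKT comparison you sketch then requires that $\partial\|X\|_{\omega,S_p}^p$ and $\lambda\,W^{\top}\partial\|WX\|_{\omega,S_p}^p$ coincide at the candidate point. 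For $p<1$ these subdifferentials are built from the singular vectors of $X$ and of $WX$ respectively, and for the block-diagonal $W=\mathrm{diag}(\sigma_r^{-1}I,\sigma_g^{-1}I,\sigma_b^{-1}I)$ used in the paper (with genuinely distinct $\sigma_r,\sigma_g,\sigma_b$) these two SVDs are \emph{not} related by a common rotation, so there is no single constant $\lambda$ that makes the two sets equal at a generic $X$. Your proposed escape---that the minimizer forces ``enough alignment between the column space of $\hat X$ and the eigenspaces of $W$'' for the equality case of Theorem~1 to hold---is the whole theorem, and you give no mechanism for it. In fact, equality in Theorem~1 for every index $i$ would force the left singular vectors of $\hat X$ to lie in a single eigenspace of $W$, i.e.\ $\hat X$ supported in one channel block, which is false for typical $Y$.

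In short, the plan reduces the theorem to an ``equality-at-the-optimum'' claim and then leaves that claim unproved; without an independent argument for that alignment (or a different route altogether, e.g.\ working directly with the substitution $Z=WX$ and showing the two problems are reparameterizations of a common problem with suitably redefined weights $\omega$ and $\lambda$), the proof does not go through.
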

\begin{proof}
The proof can be found in the supplementary material.
\end{proof}
Thus, the original problem \eqref{mbwpnm} has been converted to problem \eqref{mbwpnm1}, which can be solved more easily. We then have the Theorem 3 and 4.
\begin{mythm}\cite{hom1991topics}
Let $A,B \in {M_{m,n}}$, let $q = \min \{ m,n\} $, and let ${\sigma _1}(A) \ge ... \ge {\sigma _q}(A)\;and\;{\sigma _1}(B) \ge ... \ge {\sigma _q}(B)$, denote the non-increasingly ordered singular values of $A$ and $B$, respectively. Then 
\begin{equation}
tr({A^T}B) \le \sum\limits_{i = 1}^q {{\sigma _i}(A){\sigma _i}(B) = } \;tr(\sigma {(A)^T}\sigma (B)).
\end{equation}
\end{mythm}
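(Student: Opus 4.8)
The plan is to prove this classical von Neumann trace inequality by reducing the trace to an explicit bilinear form in the two singular-value sequences, weighted by the entries of a pair of orthogonal matrices, and then controlling those weights with a doubly (sub)stochastic matrix. First I would pass to the singular value decompositions $A = U_A \Sigma_A V_A^T$ and $B = U_B \Sigma_B V_B^T$, where $\Sigma_A, \Sigma_B$ carry the non-increasingly ordered singular values on their diagonals and $U_A, V_A, U_B, V_B$ are orthogonal. Substituting and using the cyclic invariance of the trace gives $tr(A^T B) = tr(\Sigma_A^T P \Sigma_B Q)$ with $P = U_A^T U_B$ and $Q = V_B^T V_A$ again orthogonal. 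Expanding this trace against the diagonal structure of $\Sigma_A$ and $\Sigma_B$ yields $tr(A^T B) = \sum_{i,k}\sigma_i(A)\sigma_k(B)\,P_{ik}Q_{ki}$.

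The second step is to bound the coefficients $t_{ik} := P_{ik}Q_{ki}$. Since $\sigma_i(A),\sigma_k(B) \ge 0$, I would replace $t_{ik}$ by $|t_{ik}|$ and then apply the elementary inequality $2|P_{ik}Q_{ki}| \le P_{ik}^2 + Q_{ki}^2$, so that $|t_{ik}| \le \tfrac12(P_{ik}^2 + Q_{ki}^2) =: s_{ik}$. The crucial observation is that $s$ is doubly stochastic (doubly substochastic in the genuinely rectangular case), because the rows and columns of an orthogonal matrix are unit vectors, so every row sum and column sum of $s$ equals (or is at most) one. This converts the problem into maximizing $\sum_{i,k}\sigma_i(A)\sigma_k(B)\,s_{ik}$ over the set of doubly stochastic matrices.

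Finally, I would invoke Birkhoff's theorem, which states that every doubly stochastic matrix is a convex combination of permutation matrices; since the objective is linear in $s$, its maximum is attained at a permutation matrix, that is, at $\sum_i \sigma_i(A)\sigma_{\pi(i)}(B)$ for some permutation $\pi$. The rearrangement inequality then shows that, because both singular-value sequences are sorted in non-increasing order, this sum is largest for $\pi = \mathrm{id}$, giving $\sum_i \sigma_i(A)\sigma_i(B)$. Chaining the inequalities delivers $tr(A^TB) \le \sum_{i=1}^q \sigma_i(A)\sigma_i(B)$, and the stated equality with $tr(\sigma(A)^T\sigma(B))$ is immediate once $\sigma(A)$ and $\sigma(B)$ are read as the vectors (equivalently, diagonal matrices) of singular values.

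The main obstacle I expect is the dimensional bookkeeping in the rectangular case $m \ne n$: there $P$ and $Q$ live in orthogonal groups of different sizes, only their $q\times q$ corners enter the trace, and those corners are merely submatrices of orthogonal matrices, so $s$ becomes doubly substochastic rather than doubly stochastic. I would handle this either by checking directly that enlarging a doubly substochastic matrix to a doubly stochastic one can only increase a sum with nonnegative weights, or, more cleanly, by padding $A$ and $B$ with zero rows or columns to make them square, which leaves all singular values and the trace unchanged and reduces everything to the square case.
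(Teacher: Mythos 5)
Your proposal is correct. Note, however, that the paper offers no proof of this statement at all: it is quoted as a known result (von Neumann's trace inequality) with a citation to Horn and Johnson's \emph{Topics in Matrix Analysis}, so there is no in-paper argument to compare against. What you have reconstructed is essentially the classical textbook proof that the cited reference itself uses: pass to the SVDs to get $tr(A^TB)=\sum_{i,k}\sigma_i(A)\sigma_k(B)P_{ik}Q_{ki}$ with $P=U_A^TU_B$, $Q=V_B^TV_A$ orthogonal, bound $|P_{ik}Q_{ki}|\le\tfrac12\left(P_{ik}^2+Q_{ki}^2\right)$ to land in the doubly (sub)stochastic polytope, then apply Birkhoff's theorem plus the rearrangement inequality to identify the identity permutation as the maximizer. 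All three steps are sound, and you correctly flag the only delicate point, the rectangular case: your padding device (augmenting $A$ and $B$ with zero rows or columns) is the cleanest fix, since it preserves $tr(A^TB)$ and only appends zero singular values, which contribute nothing to the right-hand side; the alternative route via dominating a doubly substochastic matrix by a doubly stochastic one also works but requires quoting one more lemma. The final identification of $\sum_i\sigma_i(A)\sigma_i(B)$ with $tr(\sigma(A)^T\sigma(B))$ is immediate, as you say. In short: the proof is complete and correct, and it supplies exactly the argument the paper delegates to its reference.
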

\begin{mythm}
Let the SVD of $WY \in {R^{m \times n}}$ be $WY = U\Delta {V^T}$ with $\Delta \; = diag({\delta _1},...{\delta _r}),r = \min \{ m,n\} $. Then an optimal solution to \eqref{mbwpnm1} is $X = {W^{ - 1}}U\Sigma {V^T}$ with $\Sigma  = diag({\sigma _1},...{\sigma _r})$, where ${\sigma _i}$  is given by solving the problem below:
\begin{equation}
\begin{gathered}
\mathop {\min }\limits_{{\sigma _1}, \ldots {\sigma _r}} \sum\limits_{i = 1}^r {\left[ {{{({\sigma _i} - {\delta _i})}^2} + \lambda {w_i}\sigma _i^p} \right]} \;,\;i = 1, \ldots r\\
s.t.\;{\sigma _i} \ge 0,\;and\;{\sigma _i} \ge {\sigma _j},\;for\;i \le j
\label{lp}
\end{gathered}
\end{equation}
\end{mythm}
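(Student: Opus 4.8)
The plan is to reduce the matrix problem \eqref{mbwpnm1} to the scalar problem \eqref{lp} by a change of variables followed by von Neumann's trace inequality (Theorem 3). First I would substitute $Z = WX$, which is legitimate because $W$ is an invertible (positive block-diagonal) matrix; this turns the fidelity term into $\|Z - WY\|_F^2$ and the regularizer into $\lambda\|Z\|_{\omega,S_p}^p$, so the problem becomes $\min_Z \|Z - WY\|_F^2 + \lambda\sum_i w_i \sigma_i^p(Z)$, with $WY = U\Delta V^T$ the given SVD.

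Next I would expand the Frobenius term as $\|Z - WY\|_F^2 = \|Z\|_F^2 - 2\,\mathrm{tr}(Z^T WY) + \|WY\|_F^2 = \sum_i \sigma_i^2(Z) - 2\,\mathrm{tr}(Z^T WY) + \sum_i \delta_i^2$, rewriting the first and last sums through the singular values. Applying Theorem 3 gives $\mathrm{tr}(Z^T WY) \le \sum_i \sigma_i(Z)\,\delta_i$, so the whole objective is bounded below by $\sum_i \left[(\sigma_i(Z) - \delta_i)^2 + \lambda w_i \sigma_i^p(Z)\right]$, a quantity depending only on the (nonnegative, decreasingly ordered) singular values of $Z$.

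The key remaining step is to show this lower bound is attained, so that minimizing over $Z$ collapses to minimizing over singular-value vectors. I would verify that choosing $Z = U\Sigma V^T$ — that is, $Z$ sharing the singular bases $U,V$ of $WY$ — yields $\mathrm{tr}(Z^T WY) = \mathrm{tr}(\Sigma\Delta) = \sum_i \sigma_i\delta_i$, so equality holds in the trace inequality and the bound is exact. Hence the matrix minimization is equivalent to the scalar problem \eqref{lp} in $(\sigma_1,\dots,\sigma_r)$ subject to $\sigma_i \ge 0$ and the decreasing-order constraint, both of which are forced because the $\sigma_i$ are singular values and because tightness of von Neumann requires the two singular-value sequences to be sorted consistently. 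Undoing the substitution then gives $X = W^{-1}Z = W^{-1}U\Sigma V^T$, as claimed.

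I expect the main obstacle to be the achievability (equality) direction: Theorem 3 as stated supplies only the upper bound, so I must argue separately that the common-singular-basis choice makes the inequality tight, and confirm that the decreasing-order constraint in \eqref{lp} is precisely what pairs the $i$-th largest singular value of $Z$ (carrying weight $w_i$) with $\delta_i$. A secondary point worth checking is that restricting attention to $Z$ of the form $U\Sigma V^T$ loses no generality — every competing $Z$ has objective at least the scalar lower bound, and this restricted family already attains it, so any solution of \eqref{lp} produces a genuine global optimizer.
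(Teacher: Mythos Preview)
Your proposal is correct and follows essentially the approach the paper intends: the paper places the von Neumann trace inequality (Theorem~3) immediately before Theorem~4 precisely so that it can be invoked in the way you describe, and the substitution $Z=WX$ is the natural device that makes the regularizer $\lambda\|WX\|_{\omega,S_p}^p$ depend only on the singular values of $Z$. Your handling of the attainability step and of the ordering constraint is sound and matches what is needed.
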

\begin{proof}
	The proof can be found in the supplementary material.
\end{proof}
Obviously, problem \eqref{lp} will be more trivial to handle if the additional order constraint (i.e. ${\sigma _i} \ge {\sigma _j},\;i \le j$) can be abandoned. Therefore, we take into account the more general condition that problem \eqref{lp} is decomposed into $r$ independent subproblems:
\begin{equation}
\min {f_i}(\sigma ) = {({\sigma _i} - {\delta _i})^2}\; + \lambda {\omega _i}\sigma _i^p,\;i = 1, \ldots r,
\label{sublp}
\end{equation}
which has been explored in \cite{lu2015generalized,zuo2013generalized}. To obtain the solution of each subproblem \eqref{sublp}, the generalized soft-thresholding (GST) algorithm \cite{zuo2013generalized}is adopted. Specifically, given ${\omega _i}$ and $p$, there exists a specific threshold:
\begin{equation}
\tau _p^{GST}({w_i}) = {(2{\omega _i}(1 - p))^{\frac{1}{{2 - p}}}} + {\omega _i}p{(2{\omega _i}(1 - p))^{\frac{{p - 1}}{{2 - p}}}}.
\end{equation}
If ${\delta _i} < \tau _p^{GST}({\omega _i}),{\sigma _i} = 0$ is the global minimum; otherwise, for any ${\delta _i} \in (\tau _p^{GST}({\omega _i}), + \infty ),{\sigma _i}$ has one unique minimum $S_p^{GST}({\delta _i};{\omega _i})$ satisfying the problem \eqref{sublp}, which can be obtained by solving the following equation:
\begin{equation}
S_p^{GST}({\delta _i};{\omega _i}) - {\delta _i} + {\omega _i}p{(S_p^{GST}({\delta _i};{\omega _i}))^{p - 1}} = 0.
\end{equation}
The complete description of the GST algorithm is shown in Algorithm 1; please refer to \cite{zuo2013generalized} for more details about the GST algorithm.
\begin{algorithm}
	\caption{Generalized Soft-Thresholding(GST)}
	\label{alg1}
	\begin{algorithmic}[1]
		\STATE {\bfseries Input:} $s,w,p,J$
		\STATE {\bfseries Output:} $S_p^{GST}(\delta ;\omega )$
		\STATE $\tau _p^{GST}(\omega ) = {(2\omega (1 - p))^{\frac{1}{{2 - p}}}} + \omega p{(2\omega (1 - p))^{\frac{{p - 1}}{{2 - p}}}};$
		\IF{${\rm{|}}\delta {\rm{|}} \le \tau_p^{GST}(\omega)$}
		\STATE$S_p^{GST}(\delta ;\omega ) = 0;$
		\ELSE
		\STATE$k = 0,\;{\sigma ^{(k)}} = |\delta |;$
		\FOR{$k=0$ {\bfseries to} $J$}
		\STATE${\sigma ^{(k + 1)}} = \;|\delta | - \omega p{({\sigma ^{(k)}})^{p - 1}};$
		\STATE$k = k + 1;$
		\ENDFOR
		\STATE$S_p^{GST}(\delta ;\omega ) = {\mathop{\rm sgn}} (\delta ){\sigma ^{(k)}};$
		\ENDIF
		\STATE {\bfseries Return } $S_p^{GST}(\delta ;\omega ).$
	\end{algorithmic}
\end{algorithm}

\subsection{Optimal Solution under Non-Descending Weights}
Generally, the larger singular values usually contain the major edge and texture information, which implies that the small singular values should be penalized more than the large ones. Getting back to the optimization of problem \eqref{lp}, it is meaningful to assign the non-descending weights to the non-ascending singular values for most practical applications in low level vision. Based on such considerations, we have the Theorem 5.
\begin{mythm}
The optimal solutions ${\sigma _i}$ of all the independent subproblems in \eqref{sublp} satisfy the following inequality:
\begin{equation}
\begin{gathered}
{\sigma _1} \ge {\sigma _2} \ge  \ldots  \ge {\sigma _r}\;,\;\\
when\;0 \le {\omega _1} \le {\omega _2} \le  \ldots {\omega _r}\;\;and\;\;\;{\delta _1} \ge {\delta _2} \ge  \ldots  \ge {\delta _r}.
\end{gathered}
\end{equation}
\end{mythm}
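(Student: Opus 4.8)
The plan is to exploit the fact that, once the ordering constraint of \eqref{lp} is dropped, each $\sigma_i$ is simply the global minimizer over $\sigma \ge 0$ of the scalar function $f_i(\sigma) = (\sigma - \delta_i)^2 + \lambda\omega_i\sigma^p$ appearing in \eqref{sublp}, whose existence and uniqueness (above the threshold) are guaranteed by the GST operator. It then suffices to prove the single-step inequality $\sigma_i \ge \sigma_{i+1}$ for every adjacent pair, since chaining these across $i$ yields the full chain $\sigma_1 \ge \cdots \ge \sigma_r$. The genuine difficulty is that $f_i$ is non-convex for $p < 1$ (the term $\sigma^p$ is concave), so I cannot invoke the usual monotonicity of a convex proximal map. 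Instead I would argue directly from \emph{global} optimality, comparing objective values at competing minimizers, and I would move from the parameters $(\delta_i,\omega_i)$ to $(\delta_{i+1},\omega_{i+1})$ in two stages, changing one parameter at a time.

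For Stage~1 I fix the weight at $\omega_{i+1}$ and let $\delta$ decrease. Let $a$ be the global minimizer of $(\sigma-\delta_i)^2 + \lambda\omega_{i+1}\sigma^p$ and $b := \sigma_{i+1}$ the global minimizer of $(\sigma-\delta_{i+1})^2 + \lambda\omega_{i+1}\sigma^p$. Adding the two optimality inequalities (each candidate beats the other in its own objective) cancels the common $\lambda\omega_{i+1}(a^p+b^p)$ terms and leaves $(a - \delta_i)^2 + (b - \delta_{i+1})^2 \le (b - \delta_i)^2 + (a - \delta_{i+1})^2$. Expanding the squares and cancelling reduces this to $(\delta_i - \delta_{i+1})(a - b) \ge 0$, so the hypothesis $\delta_i \ge \delta_{i+1}$ forces $a \ge b$.

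For Stage~2 I fix $\delta_i$ and let the weight increase: with $c := \sigma_i$ and $a$ as above, the same addition of optimality inequalities now cancels the common quadratic terms $(c-\delta_i)^2 + (a-\delta_i)^2$ and leaves $\lambda(\omega_i - \omega_{i+1})(c^p - a^p) \le 0$; since $\omega_i \le \omega_{i+1}$ this gives $c^p \ge a^p$, and because $t \mapsto t^p$ is strictly increasing on $[0,\infty)$ we conclude $c \ge a$. Chaining the two stages then gives $\sigma_i = c \ge a \ge b = \sigma_{i+1}$, which is exactly the claimed monotonicity.

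I expect the main obstacle to be conceptual rather than computational: the cancellation that makes the argument go through relies on using the global optimality of \emph{both} candidates rather than their first-order conditions, which is precisely what survives the non-convexity. A naive implicit-differentiation attack on the fixed-point equation $\delta = \sigma + \lambda\omega p\,\sigma^{p-1}$ would stumble here, since that equation has a decreasing and an increasing branch together with the thresholding discontinuity at $\tau_p^{GST}$. A few boundary cases remain to be verified but do not threaten the conclusion: when a minimizer equals $0$ the value-based inequalities still hold with the convention $0^p = 0$, and when $\delta_i = \delta_{i+1}$ or $\omega_i = \omega_{i+1}$ the corresponding stage compares two identical problems, so a consistent GST tie-breaking keeps the chain intact.
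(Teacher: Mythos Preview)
Your argument is correct and rests on the same key idea as the paper's proof: compare global objective values at competing minimizers (an exchange/rearrangement inequality), which is exactly what survives the non-convexity of $\sigma^p$. The only difference is in execution. The paper does the comparison in a single step, writing the two optimality inequalities for $\sigma_i$ and $\sigma_j$ directly (with both $\delta$ and $\omega$ changed simultaneously), summing, and obtaining
\[
2(\sigma_i-\sigma_j)(\delta_i-\delta_j)+\lambda(\sigma_i^p-\sigma_j^p)(\omega_j-\omega_i)\ge 0,
\]
from which $\sigma_i\ge\sigma_j$ is read off. You instead introduce an auxiliary minimizer $a$ and vary one parameter at a time in two stages. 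What the paper's route buys is brevity and no need for the extra object $a$; what your route buys is that each stage yields a single nonnegative product, so the sign conclusion is immediate, whereas the paper's combined inequality tacitly uses the (easy but unstated) fact that $\sigma_i-\sigma_j$ and $\sigma_i^p-\sigma_j^p$ share a sign to rule out cancellation between the two terms. Your handling of the degenerate cases ($\delta_i=\delta_{i+1}$, $\omega_i=\omega_{i+1}$, thresholded $\sigma=0$) is also more explicit than the paper's.
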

\begin{proof}
For $i \le j \le r$, we have the following inequality:
\begin{equation}
\begin{gathered}
f({\sigma _j},{\delta _i},{\omega _i}) \ge f({\sigma _i},{\delta _i},{\omega _i})\\
f({\sigma _i},{\delta _j},{\omega _j}) \ge f({\sigma _j},{\delta _j},{\omega _j})
\end{gathered}
\end{equation}
after substitution i.e.
\begin{equation}
\begin{gathered}
{({\sigma _j} - {\delta _i})^2} + \lambda {\omega _i}\sigma _j^p \ge {({\sigma _i} - {\delta _i})^2} + \lambda {\omega _i}\sigma _i^p\\
{({\sigma _i} - {\delta _j})^2} + \lambda {\omega _j}\sigma _i^p \ge {({\sigma _j} - {\delta _j})^2} + \lambda {\omega _j}\sigma _j^p
\end{gathered}
\end{equation}
Summing them together and simplification, it reduces to
\begin{equation}
2({\sigma _i} - {\sigma _j})({\delta _i} - {\delta _j}) + \lambda (\sigma _i^p - \sigma _j^p)({\omega _j} - {\omega _i}) \ge 0
\end{equation}
Thus
${\sigma _i} \ge {\sigma _j},\;when\;\;{\omega _j} \ge {\omega _i}\;and\;{\delta _i} \ge {\delta _j}.$
\end{proof}
According to Theorem 5, solving problem \eqref{lp} is equivalent to solving all the independent subproblems in \eqref{sublp} with the non-descending weights. So far, the original problem \eqref{mbwpnm} undergo a series of transformations:\eqref{mbwpnm}$\Rightarrow$\eqref{mbwpnm1}$\Rightarrow$\eqref{lp}$\Rightarrow$\eqref{sublp} before it can be effectively solved. Finally, the proposed algorithm of MBWPNM is summarized in Algorithm 2.
\begin{algorithm}
	\caption{MBWPNM via GST}
	\label{alg2}
	\begin{algorithmic}[1]
		\STATE {\bfseries Input:} ${{Y,W,\{ }}{\omega _i}{{\} }}_{i = 1}^r$ {in non-descending order} $p$
		\STATE {\bfseries Output:} {Matrix} $\hat{X}$
		\STATE ${{WY = U}}\Sigma {{{V}}^T},\Sigma  = diag({\delta _1}, \ldots ,{\delta _r});$
		\FOR{$i=1$ {\bfseries to} $r$}
		\STATE ${\sigma _i} = GST({\delta _{i,}}{\rm{,}}{\omega _i},p);$
		\ENDFOR
		\STATE $\Sigma  = diag({\sigma _{1,}} \ldots ,{\sigma _r});$
		\STATE {\bfseries Return} $\hat X = {W^{ - 1}}U\Delta {V^T}.$
	\end{algorithmic}
\end{algorithm}

\section{Color and Multispectral Image Denoising}
\subsection{The Denoising Algorithm}
Similar to MCWNNM model, MBWPNM is adopted to the matrix of color image nonlocal similar patches for noise removal, which are concatenated from RGB channel. Specifically, given a degraded color image $Y$, each local patch of size $h \times h \times 3$ is stretched to a patch vector, denoted by ${y_i} = {[y_{i,r}^T,y_{i,g}^T,y_{i,b}^T]^T} \in {R^{3{{\rm{h}}^2}}}$, where ${y_{i,r}},{y_{i,g}},{y_{i,b}} \in {R^{{h^2}}}$ are the patches in R,G,B channels, respectively. For a local patch $y_i$, we search for the $M-1$ most similar patches across the image $Y$ (in practice, in a large enough local window) by the block matching method proposed in \cite{dabov2008image}. Then, by stacking those nonlocal similar patches into a matrix $Y_i$ column by column, we have ${Y_i} = {X_i} + {N_i} \in {R^{3{h^2} \times M}}$, where $X_i$ and $N_i$ are the corresponding clean and noise patch matrices. Applying MBWPNM algorithm to $Y_i$, the whole image $X$ can be estimated through aggregating all the denoised patches $X_i$.

Compared with RGB color image, multispectral image(MSI) is actually a more channels image. So, it is natural to apply MBWPNM model to the area of MSI denoising. For a given 3-D patch cube, which is stacked by patches at the same position of MSI over all band, there are also many patch cubes similar to it. The low rank property of the image matrix is explored by rearranging respectively those nonlocal similar patch cubes as a 1-D vector and stacking those vectors into a matrix column by column. Figure~\ref{mil_image}. shows the detailed process. \cite{chang2017hyper} pointed out that the spectral and non-local similarity information from mutually complementary priors can be jointly utilized in this way and effectively improve the performance of denoising. It is also reasonable that using a weight matrix $W$ adjusts the contributions of multiple bands (channels) based on their different noise levels in real scene, where $W = \left( {\begin{array}{*{20}{c}}
	{\sigma _1^{ - 1}I}&0&0\\
	0& \ddots &0\\
	0&0&{\sigma _B^{ - 1}I}
	\end{array}} \right)$, $I$ is the identity matrix and $B$ is the number of bands.

In the application of color image and MSI denoising, theoretically, the larger the singular values, the less they should be shrunk. Therefore, the weight assigned to the singular value ${\sigma _i}(X)$ is inversely proportional to it, and we let
\begin{equation}
{\omega _i} = c\sqrt M /(\sigma _i^{1/p} + \varepsilon ),
\end{equation}
where $c>0$ is a constant, $M$ is the number of similar patches and $\varepsilon$ is to avoid dividing by zero. We adopt the iterative regularization scheme to restore clean image, and then the image $X$ can be reconstructed by aggregating all the denoised patches together. The MBWPNM based color image and MSI denoising algorithm is summarized in Algorithm 3.

\begin{figure}[t]
	\vskip 0.2in
	\begin{center}
		\centerline{\includegraphics[width=\columnwidth]{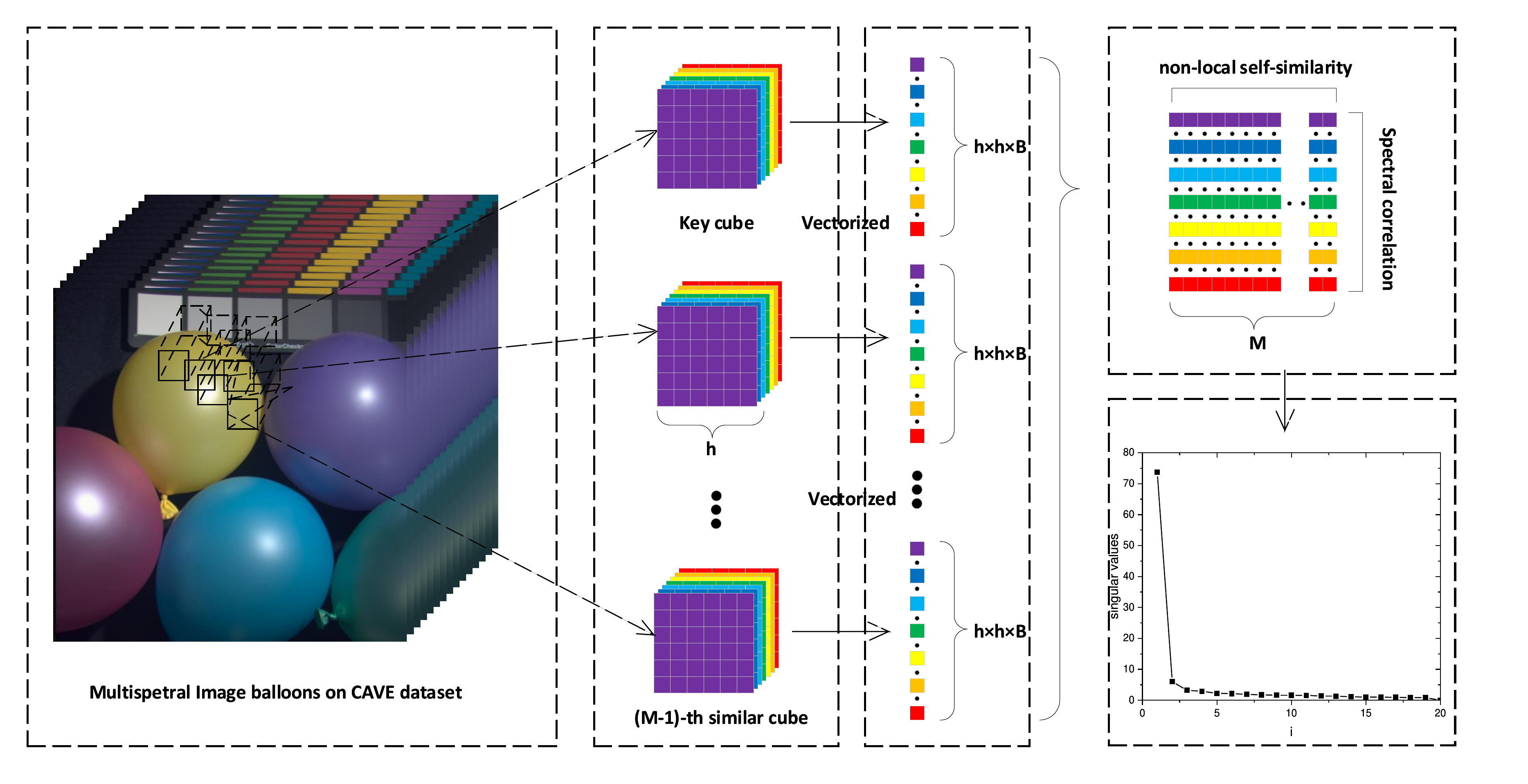}}
		\caption{The flowchart of the low rank property representation.}
		\label{mil_image}
	\end{center}
	\vskip -0.2in
\end{figure}
\begin{algorithm}
	\caption{Image Denoising by MBWPNM}
	\label{alg3}
	\begin{algorithmic}[1]
		\STATE {\bfseries Input:} {Noisy Image} $Y,$ {noise level} ${{{\{ }}{{{\sigma }}_{{i}}}{{\} }}_{{{i = 1}}}^B},K$ 
		\STATE {\bfseries Output:} {Denoised Image} ${\hat X}^K$ 
		\STATE {\bfseries Initialization:} ${\hat X}^0=Y,{\hat Y}^0=Y;$
		\FOR {$K=1$ {\bfseries to} $K$}
		\STATE {Iterative regularization} ${\hat Y}^k = \;{\hat X}^{(k - 1)} + \alpha ({Y} -{\hat X}^{(k - 1)});$
		\FOR {{each local patch} $y_i$}
		\STATE {Find no local similar patches to form matrix } ${{Y}}_i^k;$
		\STATE {Apply the MBWPNM model to} ${{Y}}_i^k$ {Algorithm 2;}
		\STATE {Get the estimated} $\hat X_i^k;$
		\ENDFOR
		\STATE {Aggregate} $\hat X_i^k$ {to form the image} $\hat X^k;$
		\ENDFOR
		\STATE {\bfseries Return } {Denoised Image} $\hat X^k.$
	\end{algorithmic}
\end{algorithm}
\subsection{Complexity Analysis}
For a matrix of size ${h^2}B \times M$, where $B$ is the number of bands (channels), $h$ denotes the width and height of local patch $y_i$ in Algorithm 3, and $M$ is the number of similar patches, the cost of SVD in each iteration(step 6) is $O(\min \{ {h^4}{B^2}M,{h^2}B{M^2}\} )$. The GST algorithm (step 3 in Algorithm 2) costs $O(JM)$, where $J$ is the number of iterations in the GST Algorithm. Therefore, the overall cost is $O(K \times N \times (\min \{ {h^4}{B^2}M,{h^2}B{M^2}\}  + JM))$, where $K$ is the number of iterations in Algorithm 3, and $N$ denotes the total number of patches. Specially, an amount of $O(K \times N \times (\min \{ {h^4}M,{h^2}{M^2}\}  + JM))$ flops are required for the color image denoising.
\section{Experimental Results}
\subsection{Experimental Setting}
Firstly, we compare MBWPNM method with several state-of-art image denoising methods both simulated and real noisy condition, including CBM3D\cite{dabov2007color}, NCSR\cite{Dong2013}, EPLL\cite{zoran2011learning}, Guided\cite{xu2018external}, DnCNN\cite{Zhang2017}, FDDNet\cite{Zhang2018}, MCWNNM\cite{xu2017multi}. And then we also compare it with 9 MSI denoising algorithms: 1-D sparse representation-based methods (SDS\cite{lam2012denoising}, ANLM\cite{manjon2010adaptive}), 2-D low-rank matrix recovery methods (LRMR\cite{zhang2014hyperspectral}, NMF\cite{ye2015multitask}), state-of-art tensor methods (BM3D\cite{dabov2007image}, LRTA\cite{renard2008denoising}, BM4D\cite{maggioni2013nonlocal}, ISTreg\cite{xie2016multispectral}, LLRT\cite{chang2017hyper}).

In simulated experiments, the noise levels in each band (channel) have been known. We set the noise level as the root mean square (RMS):
\begin{equation}
\sigma  = \sqrt {(\sigma _1^2 + \sigma _2^2 +  \ldots  + \sigma _B^2)/B} \;,B \ge 3
\end{equation}
for the method that requires a single parameter to input such as CBM3D and LLRT. In the real cases, we use the method to estimate \cite{liu2008automatic} the noise level for each channel.
\subsection{Color Image Denoising}
For a fair comparison, we keep the parameter settings of both MCWNNM and MBWPNM to achieve their best performance. More specifically, we set the patch size as $p=6$, the number of non-local similar patches as $M=70$, the window size for searching similar patches as $40\times40$, and the number of iterations is set as $K=8$ in simulated experiments.

\begin{figure}[h]
	\vskip 0.2in
	\begin{center}
		\subfigure[]{\includegraphics[width=0.45\columnwidth]{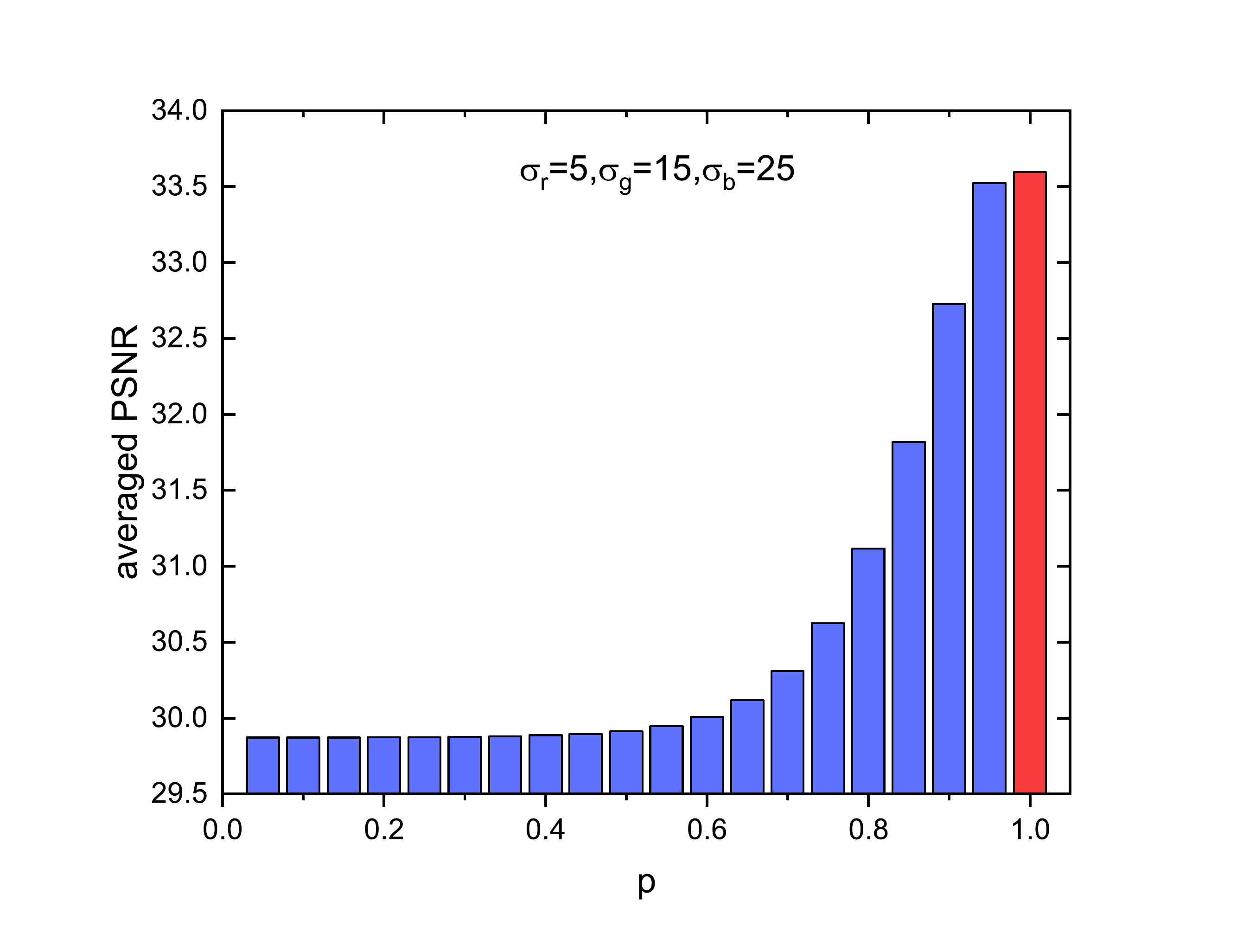}}
		\quad
		\subfigure[]{\includegraphics[width=0.45\columnwidth]{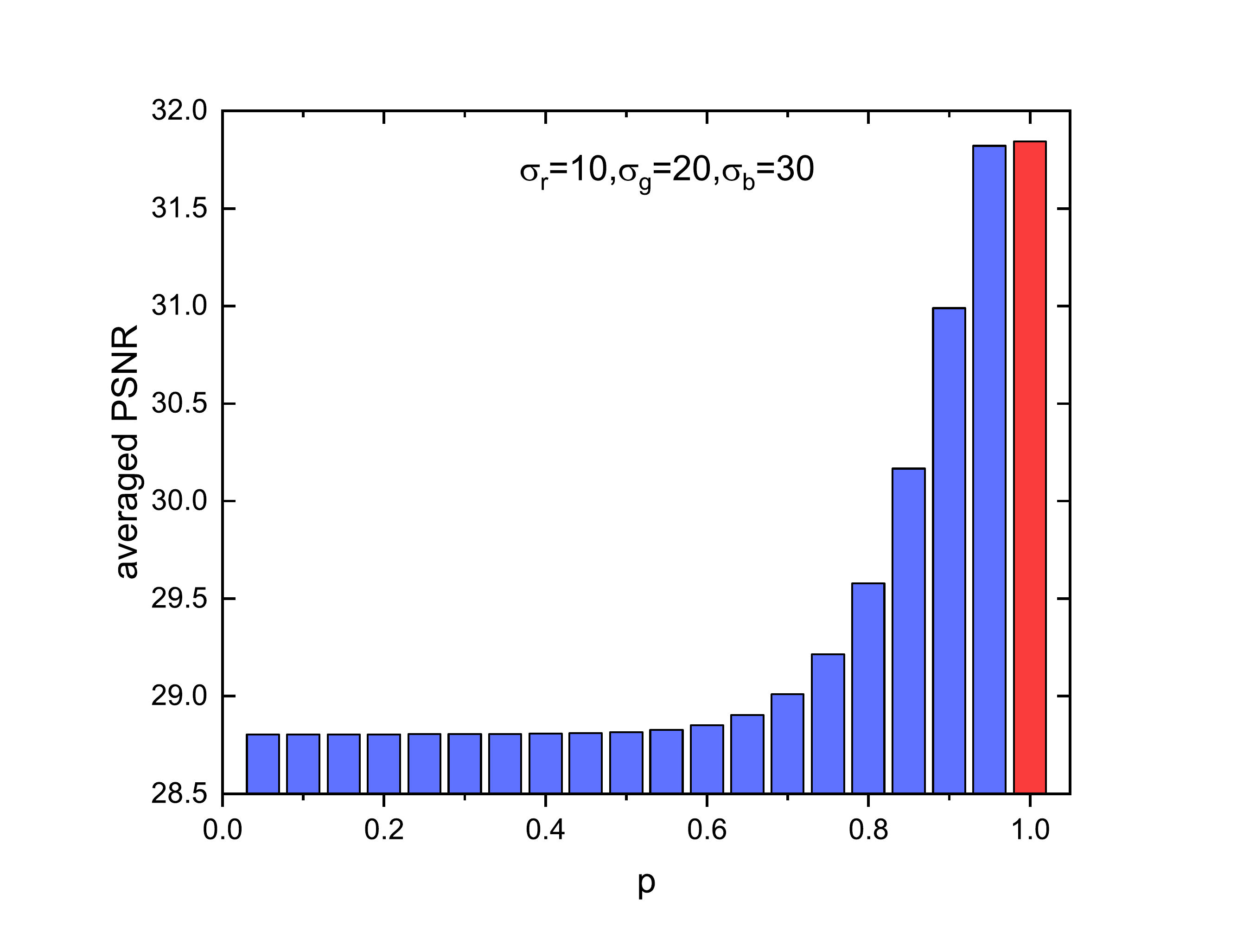}}
		\quad
		\subfigure[]{\includegraphics[width=0.45\columnwidth]{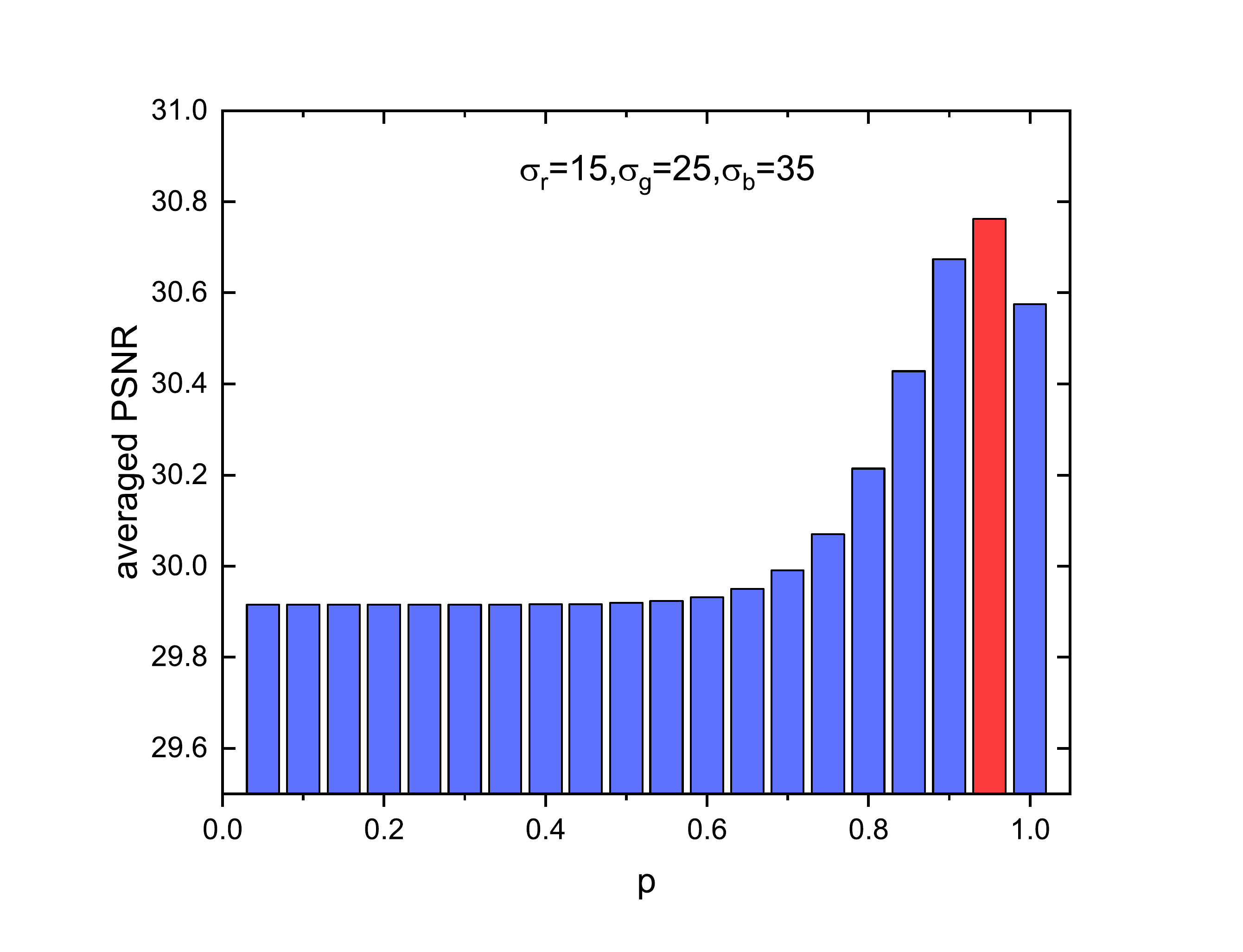}}
		\quad
		\subfigure[]{\includegraphics[width=0.45\columnwidth]{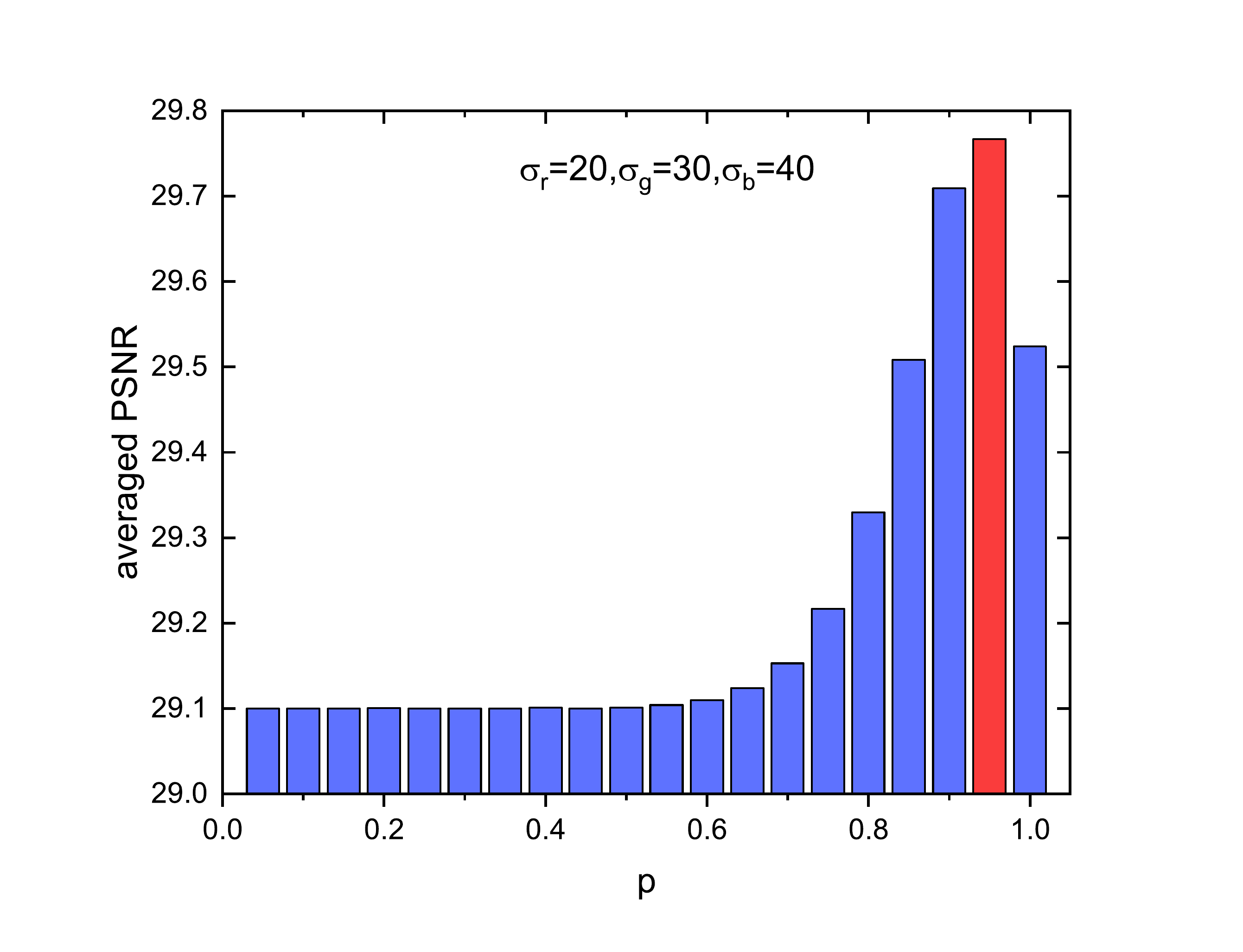}}
		\quad
		\subfigure[]{\includegraphics[width=0.45\columnwidth]{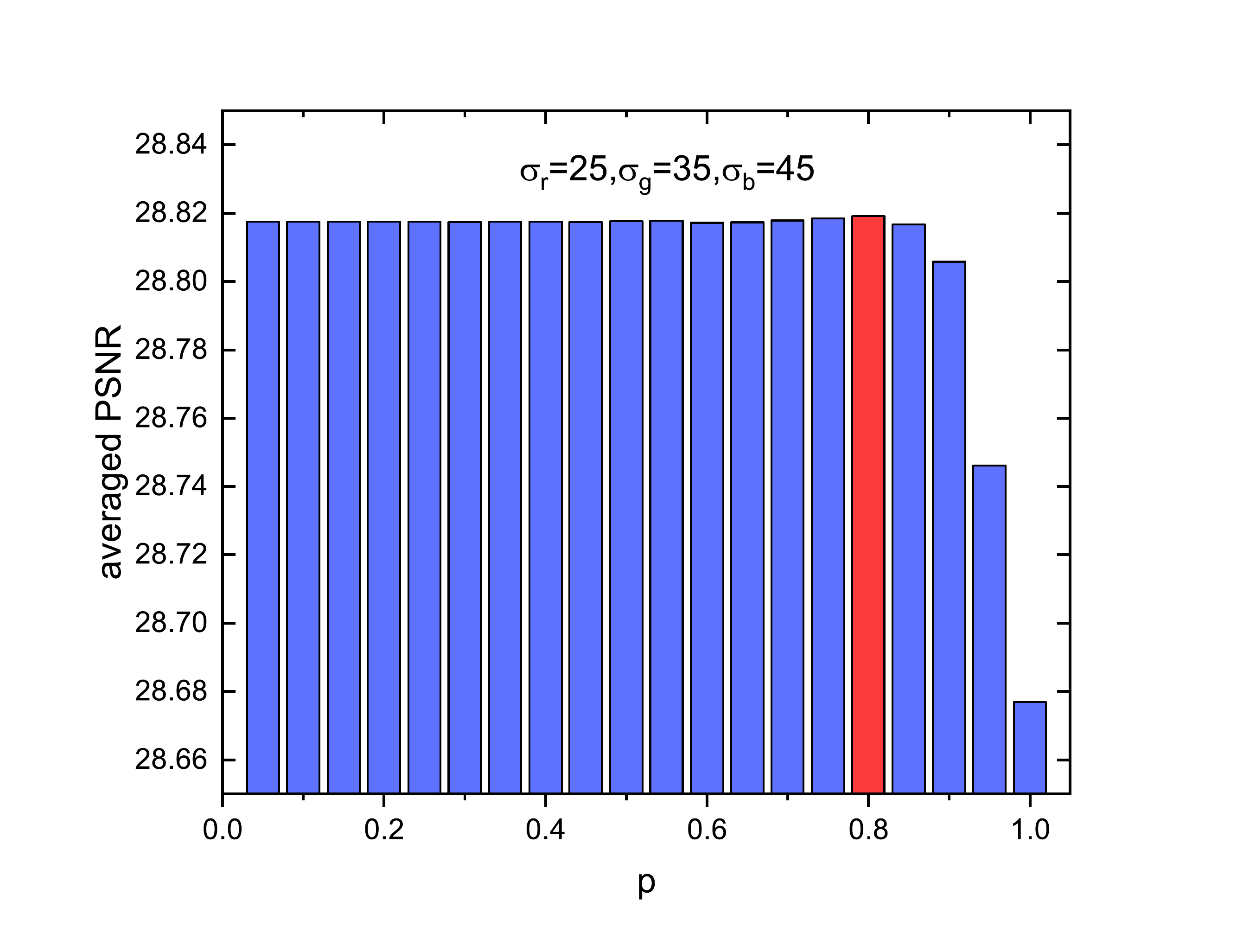}}
		\quad
		\subfigure[]{\includegraphics[width=0.45\columnwidth]{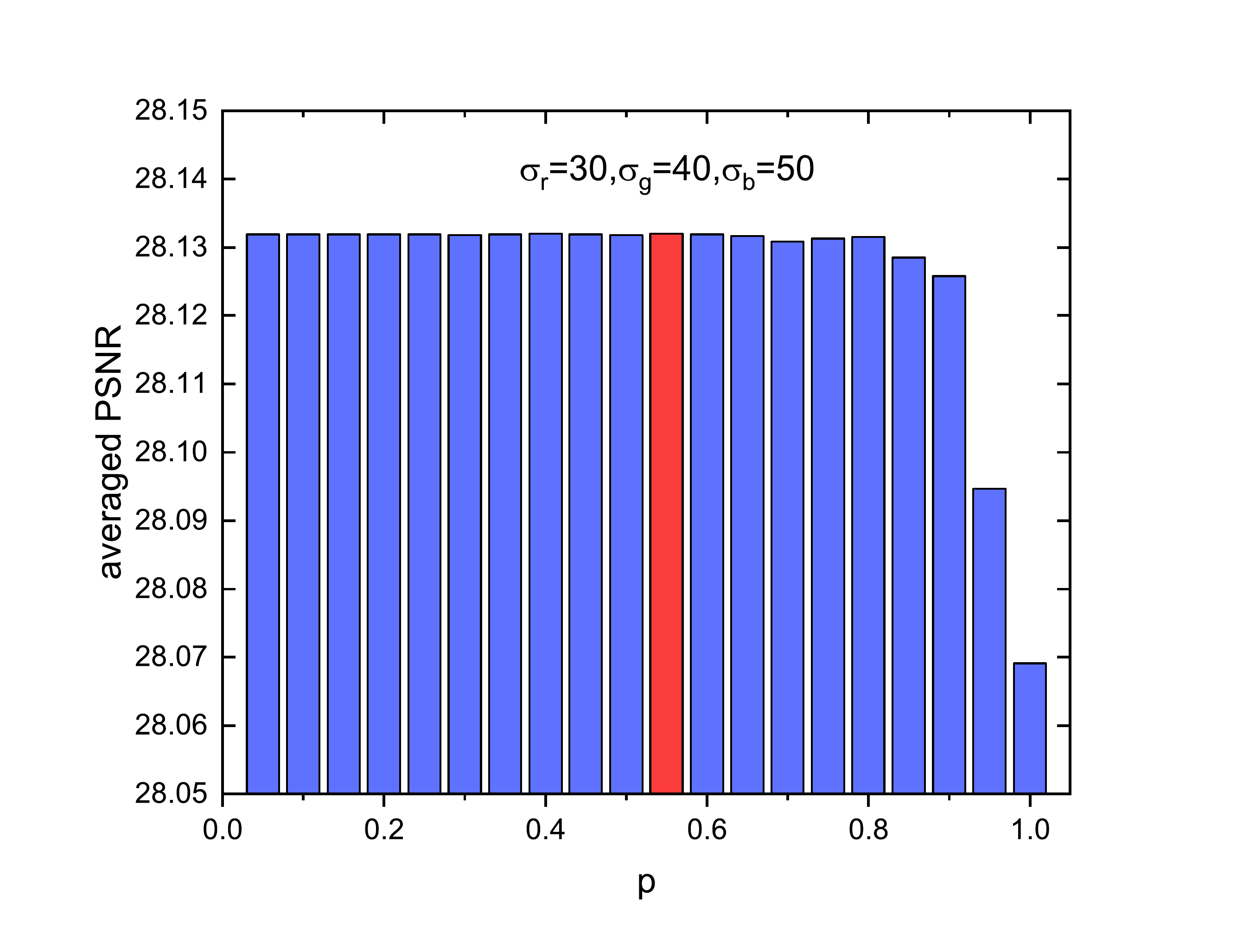}}
		\caption{The PSNR results of changing power p under different noise level on 18 color images from Master dataset\cite{zhang2011color}}
		\label{diff_p}
	\end{center}
	\vskip -0.2in
\end{figure}

\subsubsection{Simulated Color Images Denoising}
Intuitively, the different choice of power $p$ has different influence on shrinkage of singular value. How to choose a suitable setting of power $p$ for each noise level is a little complicated. So, we add AWGN to each of the R, G, B channels on the 18 color sub-images from Master dataset\cite{zhang2011color}, and then make a test with different power $p$ under different noise level. The PSNR results are shown in Figure~\ref{diff_p}, while the best PSNR result for each group is highlighted in red. In this test, we set the value of power $p$ changing from 0.05 to 1 with interval 0.05, and select 6 noise level (5,15,25), (10,20,30), (15,25,35), (20,30,40), (25,35,45), (30,40,50). In the first two subfigures of Figure~\ref{diff_p}, we can observe that the best values of $p$ are 1 when the noise level is low. And the best values of $p$ are reduced to 0.95 in the subsequent two subfigures. With the increase of noise level, the pollution of singular values is much more severe. Consequently, as shown in the last two subfigures of Figure~\ref{diff_p}, the smaller values of $p$ (0.8 and 0.55 for the latter two, respectively) are more suitable for higher noise level. In summary, the optimal value of $p$, by and large, is inversely proportional to the noise level. Sticking to this principle, we did the following experiments.

\begin{table*}[ht]
	\caption{Different simulated results (PSNR) by different methods.}
	\label{table1}
	\vskip 0.15in
	\begin{center}
		\begin{small}
			\begin{sc}
				\begin{tabular}{lccccccc}
					\toprule
					$\sigma_r,\sigma_g,\sigma_b$ & CBM3D & NCSR & EPLL & DnCNN & FFDNet & MCWNNM & MBWPNM \\
					\midrule
					(5,30,15)    & 29.46 & 30.77 & 31.68 & \textbf{33.35} & 33.29 & 32.99 & 33.34\\
					(25,5,30) & 29.38 & 30.16 & 30.82 & 32.56 & 32.48 & 32.15 & \textbf{33.02}\\
					(30,10,50)    & 29.06 & 27.95& 29.05 & 30.65 & 30.44 & 29.81 & \textbf{30.71}\\
					(40,20,30)    & 29.14 & 28.29 & 28.95 & \textbf{31.06} & 30.98 & 29.27 & 29.43\\
					\bottomrule
				\end{tabular}
			\end{sc}
		\end{small}
	\end{center}
	\vskip -0.1in
\end{table*}

\begin{table*}
	\caption{PSNR (dB) and ACT (s) of different methods on 15 cropped real noisy images\cite{nam2016holistic}.}
	\label{table2}
	\vskip 0.15in
	\begin{center}
		\begin{small}
			\begin{sc}
				\begin{tabular}{lccccccc}
					\toprule
					Camera Settings       & CBM3D & NCSR & EPLL & Guided & FFDNet & MCWNNM & MBWPNM \\
					\midrule
					& 38.25 & 38.02 & 37.00 & 40.82 & 37.05 & 41.20 & \textbf{41.21}\\
					Canon 5D III ISO=3200 & 35.85 & 34.76 & 33.88 & 37.19 & 33.91 & \textbf{37.25} & 37.10\\
					& 34.12 & 34.91 & 33.83 & 36.92 & 33.86 & 36.48 & \textbf{37.14}\\
					\midrule
					& 33.10 & 33.51 & 33.28 & 35.32 & 33.31 & \textbf{35.54} & 35.49\\
					Nikon D600 ISO=3200   & 35.57 & 34.13 & 33.77 & 36.62 & 33.81 & 37.03 & \textbf{37.06}\\
					& \textbf{40.77} & 35.44 & 34.93 & 38.68 & 34.98 & 39.56 & 39.56\\
					\midrule
					& 36.83 & 35.98 & 35.47 & 38.88 & 35.50 & \textbf{39.26} & 39.19\\
					Nikon D800 ISO=1600   & 40.19 & 36.39 & 35.71 & 40.66 & 35.75 & \textbf{41.45} & 41.44\\
					& 37.64 & 35.34 & 34.81 & 39.20 & 34.83 & \textbf{39.54} & 39.52\\
					\midrule
					& \textbf{39.72} & 33.63 & 33.26 & 37.92 & 33.30 & 38.94 & 39.10\\
					Nikon D800 ISO=3200   & 36.74 & 33.13 & 32.89 & 36.62 & 32.94 & \textbf{37.40} & 37.10\\
					& \textbf{40.96} & 33.43 & 32.91 & 37.64 & 32.94 & 39.42 & 39.47\\
					\midrule
					& 34.63 & 30.09 & 29.63 & 33.01 & 29.65 & 34.85 & \textbf{34.92}\\
					Nikon D800 ISO=6400   & 32.95 & 30.35 & 29.97 & 32.93 & 30.00 & \textbf{33.97} & 33.96\\
					& 33.61 & 30.12 & 29.87 & 32.96 & 29.88 & 33.97 & \textbf{34.09}\\
					\midrule
					Average 			  & 36.73 & 33.95 & 33.41 & 37.02 & 33.45 & 37.72 & \textbf{37.76}\\
					Time 				  & 6 & 1434 & 424 & 48 & \textbf{4} & 192 & 41\\
					\bottomrule
				\end{tabular}
			\end{sc}
		\end{small}
	\end{center}
	\vskip -0.1in
\end{table*}

\begin{figure*}[ht]
	\vskip 0.2in
	\begin{center}
		\subfigure[Ground Truth]{\includegraphics[width=0.36\columnwidth]{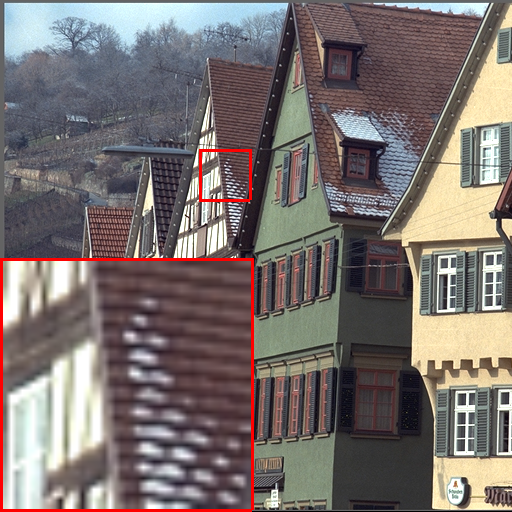}}
		\quad
		\subfigure[Noisy Image]{\includegraphics[width=0.36\columnwidth]{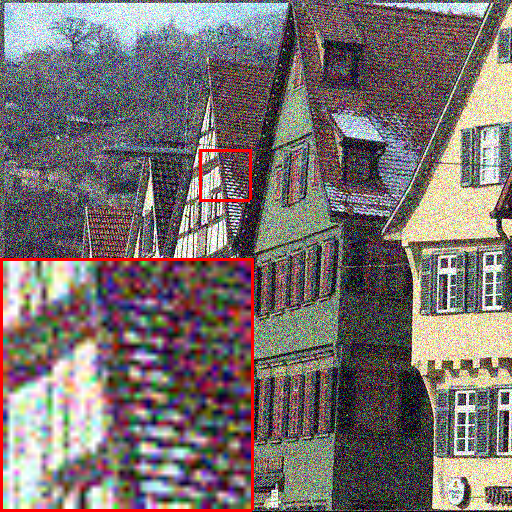}}
		\quad
		\subfigure[CBM3D]{\includegraphics[width=0.36\columnwidth]{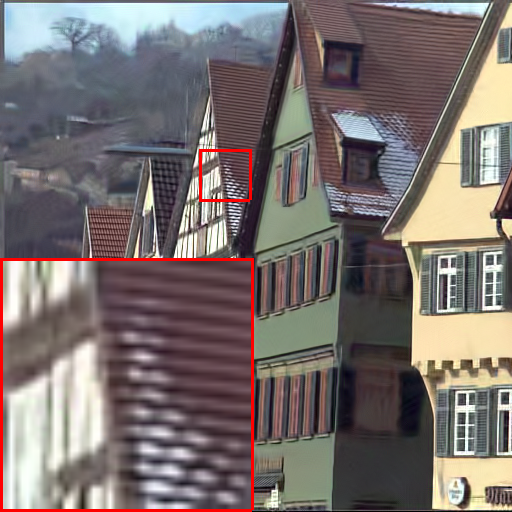}}
		\quad
		\subfigure[NCSR]{\includegraphics[width=0.36\columnwidth]{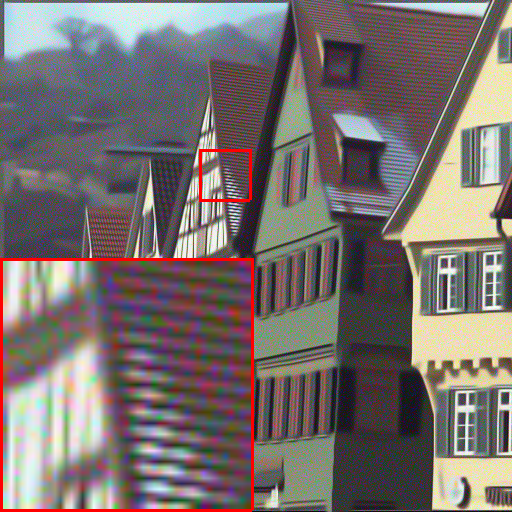}}
		\quad
		\subfigure[EPLL]{\includegraphics[width=0.36\columnwidth]{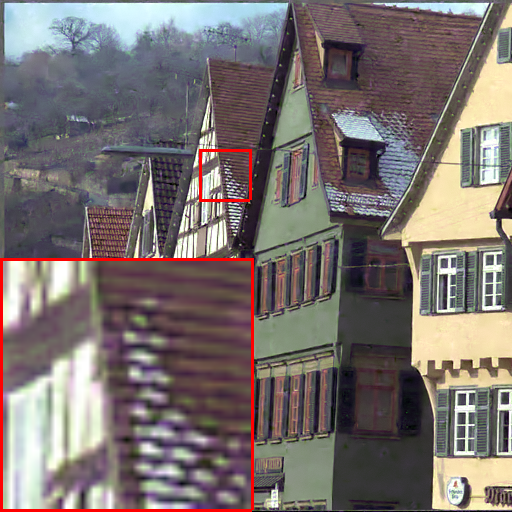}}
		\quad
		\subfigure[DnCNN]{\includegraphics[width=0.36\columnwidth]{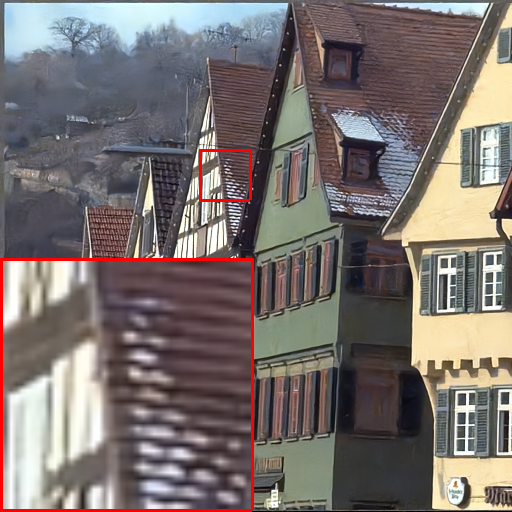}}
		\quad
		\subfigure[FFDNet]{\includegraphics[width=0.36\columnwidth]{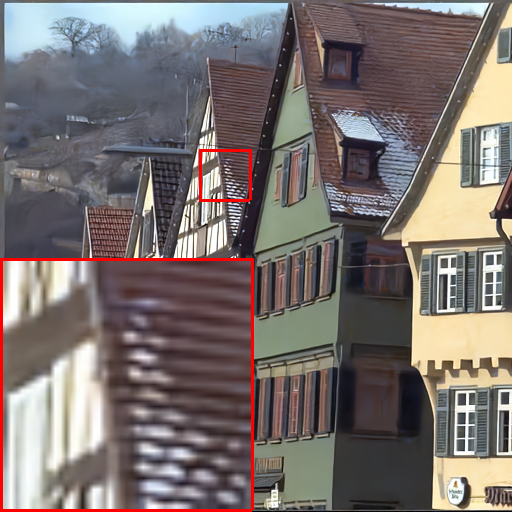}}
		\quad
		\subfigure[MCWNNM]{\includegraphics[width=0.36\columnwidth]{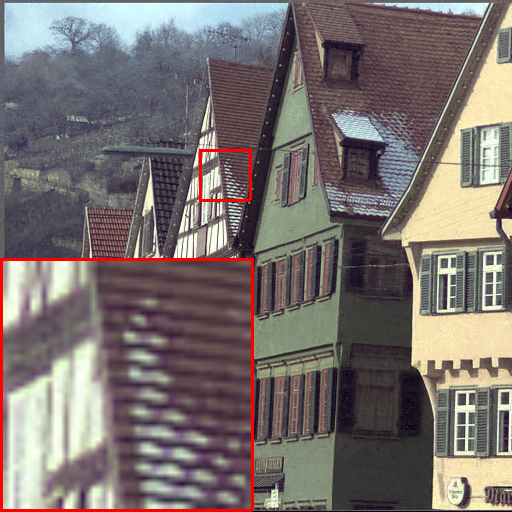}}
		\quad
		\subfigure[MBWPNM]{\includegraphics[width=0.36\columnwidth]{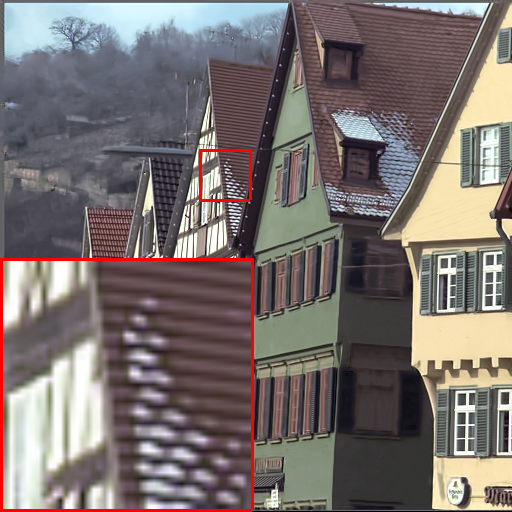}}
		\caption{Denoising results on image kodim08 in Kodak PhotoCD Dataset by different methods (noise level $\sigma_r=30,\sigma_g=10,\sigma_b=50$).(a) Ground Truth. (b) Noisy Image:17.46dB. (c) CBM3D,PSNR=26.83dB. (d) NCSR,PSNR=25.07dB. (e) EPLL,PSNR=26.02dB. (f) DnCNN,PSNR=28.32dB. (g) FFDNet,PSNR=28.18dB. (h) MCWNNM,PSNR=26.98dB. (i) MBWPNM,PSNR=28.83dB. The images are better to be zoomed in on screen.}
		\label{fig3}
	\end{center}
	\vskip -0.2in
\end{figure*}

We evaluate the completing methods on 24 color images from Kodak PhotoCD Dataset\footnote{http://r0k.us.graphics/kodak/}. Zero mean additive white Gaussian noise with four noise levels are added to those test images to generate the degraded observations. The averaged PSNR results are shown in Table~\ref{table1} and the best result is highlighted in bold. According to the discussion of power $p$, we choose $p= \{0.99, 0.98, 0.96, 0.95\}$ for each test in Table~\ref{table1}, respectively. On average, we can see that MBWPNM achieves the highest PSNR values on 2 of 4 noise levels. Compared with MCWNNM, our method achieves $0.35$dB, $0.87$dB, $0.9$dB and $0.16$dB improvements, respectively.

In Figure~\ref{fig3}, we compare the visual quality of the denoised image by the competing algorithms. One can see that methods CBM3D, DnCNN and FFDNet over-smooth in the roof area of image kodim08, while NCSR, EPLL and MCWNNM remain noise or generate more color artifacts. MBWPNM recovers the image well in visual quality than other methods. More visual comparisons can be found in the supplementary material.
\subsubsection{Real Color Images Denoising}
Different from AWGN, the real-world noise is signal dependent, and cannot be modeled by an explicit distribution. To demonstrate the robustness of our method, we compare MBWPNM with the competing methods\cite{Dong2013,Zhang2018,xu2017multi,dabov2007color,zoran2011learning,xu2018external} on three representative datasets.

The first dataset is provided in\cite{nam2016holistic}, which includes 11 indoor scenes, 500 JPEG images per scene. The ground truth noise-free images are generated by computing the mean image of each scene, and author of \cite{nam2016holistic} cropped 15 smaller images of size $512\times512$ for experiments because of original image with resolution $7360\times4912$. These images are shown in supplementary material. Quantitative comparisons, including PSNR result and averaged computational time(ACT), are listed in Table~\ref{table2}. The best results are highlighted in bold. MBWPNM achieves the highest averaged PSNR in all competing methods. It achieves $0.04$dB improvement over MCWNNM method in average and outperforms the benchmark CBM3D method by $1.03$dB in average. One can see that FFDNet based on convolutional neural nets (CNNs) has no longer performed as well as simulated experiments. This point can be confirmed in the following tests. More visual comparisons can be found in the supplementary material.

The other two datasets are provided in \cite{xu2018real} and \cite{abdelhamed2018high}, which both are much more comprehensive than the first one \cite{xu2018real}. The detailed description of these datasets is listed in supplementary file. The PSNR and ACT results of the competing algorithms are reported in Table~\ref{table3}. We can see again that MBWPNM achieves much better performance than the other competing methods. Because of limited space, more visual comparisons can be found in the supplementary file.

\textbf{Comparison on speed}. We compare the average computational time (second) of different methods, which is shown in Table 2 and 4. All experiments are implemented in Matlab on a PC with 3.2GHz CPU and 8GB RAM. The fastest result is highlighted in bold. We can see that FFDNet is the fastest on three datasets and it needs about 4 seconds, while the MBWPNM generally costs about one fifth time compared with MCWNNM. Noted that CBM3D and FFDNet are implemented with compiled C++ mex-function, while NCSR, EPLL, Guided, MCWNNM and MBWPNM are implemented purely in Matlab.

\begin{table*}[ht]
	\caption{PSNR (dB) and ACT (s) of different methods on datasets.}
	\label{table3}
	\vskip 0.15in
	\begin{center}
		\begin{small}
			\begin{sc}
				\begin{tabular}{ccccccccc}
					\toprule
							  & Index & CBM3D & NCSR & EPLL & Guided & FFDNet & MCWNNM & MBWPNM \\
					\midrule
					\multirow{2}*{Dataset 2} & PSNR  & 38.11 & 36.49 & 35.97 & 38.37 & 35.98 & 38.57 & \textbf{38.59}\\
							  & Time  & 6     & 1174  & 427   & 52    & \textbf{4}    & 171 & 38\\
				    \midrule
				    Dataset 3 & PSNR  & 37.20 & 30.21 & 27.48 & 32.55 & 27.69 & 37.82 & \textbf{37.83}\\
				              & Time  & 8     & 1284  & 426   & 48    & \textbf{4}    & 527 & 99\\
					\bottomrule
				\end{tabular}
			\end{sc}
		\end{small}
	\end{center}
	\vskip -0.1in
\end{table*}

\subsection{MSI Denoising}
Different from color image denoising, in order to give an overall evaluation for the spatial and spectral quality, four quantitative picture quality indices (PQI) are employed: PSNR, SSIM\cite{wang2004image}, ERGAS\cite{wald2002data} and SAM\cite{yuhas1993determination}. The larger PSNR and SSIM are, and the smaller ERGAS and SAM are, the better the recovered MSIs are.

The Columbia Multispectral Database (CAVE)\cite{yasuma2010generalized}is utilized in our simulated experiment. The noisy MSIs are generated by adding AWGN with different variance to each of the bands. Specifically, for the number of noise variances, we set them starting from one with the tolerance of 1, 2, 3 and 4.

We choose $p= \{1, 0.95, 0.9, 0.8\}$ for each test. For each noise setting, all of the four PQI are shown in supplementary material. On averaged PSNR, MBWPNM achieves $0.13$dB, $1.46$dB, $1.08$dB, $0.51$dB improvements over LLRT, while achieves the highest PSNR values in all competing methods. Generally, in four comparative tests, MBWPNM achieves the best performance on 11 of 16 quantitative assessments.

\section{Conclusion}
In this paper, we propose a multi-band weighted $l_p$ norm minimization (MBWPNM) model for color image and MSI denoising, which preserves the power of MCWNNM and is flexible to providing different rank components with different treatments in the practical applications. To solve the MBWPNM model, its equivalent form is deduced and it can be efficiently solved via a generalized soft-thresholding (GST) algorithm. We also proved that, when the weights are in a non-descending order, the solution obtained by the GST algorithm is still global optimum. We then applied the proposed MBWPNM algorithm to color image and MSI denoising. The experimental results on synthetic and real datasets demonstrat that the MBWPNM model achieves significant performance gains over several state-of-art methods, such as CBM3D and BM4D. In the future, we will extend the MBWPNM to other applications in computer vision problems.


\bibliography{example_paper}

\begin{thebibliography}{50}
\providecommand{\natexlab}[1]{#1}
\providecommand{\url}[1]{\texttt{#1}}
\expandafter\ifx\csname urlstyle\endcsname\relax
  \providecommand{\doi}[1]{doi: #1}\else
  \providecommand{\doi}{doi: \begingroup \urlstyle{rm}\Url}\fi

\bibitem[Abdelhamed et~al.(2018)Abdelhamed, Lin, and Brown]{abdelhamed2018high}
Abdelhamed, A., Lin, S., and Brown, M.~S.
\newblock A high-quality denoising dataset for smartphone cameras.
\newblock In \emph{Proceedings of the IEEE Conference on Computer Vision and
  Pattern Recognition}, pp.\  1692--1700, 2018.

\bibitem[Boyd et~al.(2011)Boyd, Parikh, Chu, Peleato, Eckstein,
  et~al.]{boyd2011distributed}
Boyd, S., Parikh, N., Chu, E., Peleato, B., Eckstein, J., et~al.
\newblock Distributed optimization and statistical learning via the alternating
  direction method of multipliers.
\newblock \emph{Foundations and Trends in Machine learning}, 3\penalty0
  (1):\penalty0 1--122, 2011.

\bibitem[Buades et~al.(2005)Buades, Coll, and Morel]{buades2005non}
Buades, A., Coll, B., and Morel, J.-M.
\newblock A non-local algorithm for image denoising.
\newblock In \emph{Proceedings of the IEEE Conference on Computer Vision and
  Pattern Recognition}, volume~2, pp.\  60--65. IEEE, 2005.

\bibitem[Cai et~al.(2010)Cai, Cand{\`e}s, and Shen]{cai2010singular}
Cai, J.-F., Cand{\`e}s, E.~J., and Shen, Z.
\newblock A singular value thresholding algorithm for matrix completion.
\newblock \emph{SIAM Journal on Optimization}, 20\penalty0 (4):\penalty0
  1956--1982, 2010.

\bibitem[Cand{\`e}s \& Recht(2009)Cand{\`e}s and Recht]{candes2009exact}
Cand{\`e}s, E.~J. and Recht, B.
\newblock Exact matrix completion via convex optimization.
\newblock \emph{Foundations of Computational mathematics}, 9\penalty0
  (6):\penalty0 717, 2009.

\bibitem[Chang et~al.(2017)Chang, Yan, and Zhong]{chang2017hyper}
Chang, Y., Yan, L., and Zhong, S.
\newblock Hyper-laplacian regularized unidirectional low-rank tensor recovery
  for multispectral image denoising.
\newblock In \emph{Proceedings of the IEEE Conference on Computer Vision and
  Pattern Recognition}, pp.\  4260--4268, 2017.

\bibitem[Chen \& Pock(2017)Chen and Pock]{chen2017trainable}
Chen, Y. and Pock, T.
\newblock Trainable nonlinear reaction diffusion: A flexible framework for fast
  and effective image restoration.
\newblock \emph{IEEE Transactions on Pattern Analysis and Machine
  Intelligence}, 39\penalty0 (6):\penalty0 1256--1272, 2017.

\bibitem[Dabov et~al.(2007{\natexlab{a}})Dabov, Foi, Katkovnik, and
  Egiazarian]{dabov2007color}
Dabov, K., Foi, A., Katkovnik, V., and Egiazarian, K.
\newblock Color image denoising via sparse 3d collaborative filtering with
  grouping constraint in luminance-chrominance space.
\newblock In \emph{Proceedings of the IEEE International Conference on Image
  Processing.}, volume~1, pp.\  I--313. IEEE, 2007{\natexlab{a}}.

\bibitem[Dabov et~al.(2007{\natexlab{b}})Dabov, Foi, Katkovnik, and
  Egiazarian]{dabov2007image}
Dabov, K., Foi, A., Katkovnik, V., and Egiazarian, K.
\newblock Image denoising by sparse 3-d transform-domain collaborative
  filtering.
\newblock \emph{IEEE Transactions on Image Processing}, 16\penalty0
  (8):\penalty0 2080--2095, 2007{\natexlab{b}}.

\bibitem[Dabov et~al.(2008)Dabov, Foi, Katkovnik, and
  Egiazarian]{dabov2008image}
Dabov, K., Foi, A., Katkovnik, V., and Egiazarian, K.
\newblock Image restoration by sparse 3d transform-domain collaborative
  filtering.
\newblock In \emph{Image Processing: Algorithms and Systems VI}, volume 6812,
  pp.\  681207. International Society for Optics and Photonics, 2008.

\bibitem[Dong et~al.(2013)Dong, Zhang, Shi, and Li]{Dong2013}
Dong, W., Zhang, L., Shi, G., and Li, X.
\newblock Nonlocally centralized sparse representation for image restoration.
\newblock \emph{IEEE Transactions on Image Processing}, 22\penalty0
  (4):\penalty0 1620--1630, 2013.

\bibitem[Elad \& Aharon(2006)Elad and Aharon]{elad2006image}
Elad, M. and Aharon, M.
\newblock Image denoising via sparse and redundant representations over learned
  dictionaries.
\newblock \emph{IEEE Transactions on Image Processing}, 15\penalty0
  (12):\penalty0 3736--3745, 2006.

\bibitem[Fazel(2002)]{fazel2002matrix}
Fazel, M.
\newblock \emph{Matrix rank minimization with applications}.
\newblock PhD thesis, PhD thesis, Stanford University, 2002.

\bibitem[Gu et~al.(2014)Gu, Zhang, Zuo, and Feng]{gu2014weighted}
Gu, S., Zhang, L., Zuo, W., and Feng, X.
\newblock Weighted nuclear norm minimization with application to image
  denoising.
\newblock In \emph{Proceedings of the IEEE Conference on Computer Vision and
  Pattern Recognition}, pp.\  2862--2869, 2014.

\bibitem[Gu et~al.(2017)Gu, Xie, Meng, Zuo, Feng, and Zhang]{gu2017weighted}
Gu, S., Xie, Q., Meng, D., Zuo, W., Feng, X., and Zhang, L.
\newblock Weighted nuclear norm minimization and its applications to low level
  vision.
\newblock \emph{International Journal of Computer Vision}, 121\penalty0
  (2):\penalty0 183--208, 2017.

\bibitem[He et~al.(2016)He, Zhang, Ren, and Sun]{he2016deep}
He, K., Zhang, X., Ren, S., and Sun, J.
\newblock Deep residual learning for image recognition.
\newblock In \emph{Proceedings of the IEEE Conference on Computer Vision and
  Pattern Recognition}, pp.\  770--778, 2016.

\bibitem[Hom \& Johnson(1991)Hom and Johnson]{hom1991topics}
Hom, R.~A. and Johnson, C.~R.
\newblock Topics in matrix analysis.
\newblock \emph{Cambridge UP, New York}, 1991.

\bibitem[Horn et~al.(1990)Horn, Horn, and Johnson]{horn1990matrix}
Horn, R.~A., Horn, R.~A., and Johnson, C.~R.
\newblock \emph{Matrix analysis}.
\newblock Cambridge university press, 1990.

\bibitem[Lam et~al.(2012)Lam, Sato, and Sato]{lam2012denoising}
Lam, A., Sato, I., and Sato, Y.
\newblock Denoising hyperspectral images using spectral domain statistics.
\newblock In \emph{Proceedings of the 21st International Conference on Pattern
  Recognition}, pp.\  477--480. IEEE, 2012.

\bibitem[Liu et~al.(2008)Liu, Szeliski, Kang, Zitnick, and
  Freeman]{liu2008automatic}
Liu, C., Szeliski, R., Kang, S.~B., Zitnick, C.~L., and Freeman, W.~T.
\newblock Automatic estimation and removal of noise from a single image.
\newblock \emph{IEEE Transactions on Pattern Analysis \& Machine Intelligence},
  30\penalty0 (2):\penalty0 299--314, 2008.

\bibitem[Liu et~al.(2014)Liu, Huang, and Chen]{liu2014exact}
Liu, L., Huang, W., and Chen, D.-R.
\newblock Exact minimum rank approximation via schatten p-norm minimization.
\newblock \emph{Journal of Computational and Applied Mathematics},
  267:\penalty0 218--227, 2014.

\bibitem[Lu et~al.(2015)Lu, Zhu, Xu, Yan, and Lin]{lu2015generalized}
Lu, C., Zhu, C., Xu, C., Yan, S., and Lin, Z.
\newblock Generalized singular value thresholding.
\newblock In \emph{AAAI Conference on Artificial Intelligence}, pp.\
  1805--1811, 2015.

\bibitem[Maggioni et~al.(2013)Maggioni, Katkovnik, Egiazarian, and
  Foi]{maggioni2013nonlocal}
Maggioni, M., Katkovnik, V., Egiazarian, K., and Foi, A.
\newblock Nonlocal transform-domain filter for volumetric data denoising and
  reconstruction.
\newblock \emph{IEEE Transactions on Image Processing}, 22\penalty0
  (1):\penalty0 119--133, 2013.

\bibitem[Manj{\'o}n et~al.(2010)Manj{\'o}n, Coup{\'e}, Mart{\'\i}-Bonmat{\'\i},
  Collins, and Robles]{manjon2010adaptive}
Manj{\'o}n, J.~V., Coup{\'e}, P., Mart{\'\i}-Bonmat{\'\i}, L., Collins, D.~L.,
  and Robles, M.
\newblock Adaptive non-local means denoising of mr images with spatially
  varying noise levels.
\newblock \emph{Journal of Magnetic Resonance Imaging}, 31\penalty0
  (1):\penalty0 192--203, 2010.

\bibitem[Mao et~al.(2016)Mao, Shen, and Yang]{mao2016image}
Mao, X., Shen, C., and Yang, Y.-B.
\newblock Image restoration using very deep convolutional encoder-decoder
  networks with symmetric skip connections.
\newblock In \emph{Advances in Neural Information Processing Systems}, pp.\
  2802--2810, 2016.

\bibitem[Nam et~al.(2016)Nam, Hwang, Matsushita, and Joo~Kim]{nam2016holistic}
Nam, S., Hwang, Y., Matsushita, Y., and Joo~Kim, S.
\newblock A holistic approach to cross-channel image noise modeling and its
  application to image denoising.
\newblock In \emph{Proceedings of the IEEE Conference on Computer Vision and
  Pattern Recognition}, pp.\  1683--1691, 2016.

\bibitem[Nie et~al.(2012)Nie, Huang, and Ding]{nie2012low}
Nie, F., Huang, H., and Ding, C.~H.
\newblock Low-rank matrix recovery via efficient schatten p-norm minimization.
\newblock In \emph{AAAI Conference on Artificial Intelligence}, 2012.

\bibitem[Oh et~al.(2016)Oh, Tai, Bazin, Kim, and Kweon]{oh2016partial}
Oh, T.-H., Tai, Y.-W., Bazin, J.-C., Kim, H., and Kweon, I.~S.
\newblock Partial sum minimization of singular values in robust pca: Algorithm
  and applications.
\newblock \emph{IEEE Transactions on Pattern Analysis and Machine
  Intelligence}, 38\penalty0 (4):\penalty0 744--758, 2016.

\bibitem[Ren et~al.(2016)Ren, Cao, Pan, Guo, Zuo, and Yang]{ren2016image}
Ren, W., Cao, X., Pan, J., Guo, X., Zuo, W., and Yang, M.-H.
\newblock Image deblurring via enhanced low-rank prior.
\newblock \emph{IEEE Transactions on Image Processing}, 25\penalty0
  (7):\penalty0 3426--3437, 2016.

\bibitem[Renard et~al.(2008)Renard, Bourennane, and
  Blanc-Talon]{renard2008denoising}
Renard, N., Bourennane, S., and Blanc-Talon, J.
\newblock Denoising and dimensionality reduction using multilinear tools for
  hyperspectral images.
\newblock \emph{IEEE Geoscience and Remote Sensing Letters}, 5\penalty0
  (2):\penalty0 138--142, 2008.

\bibitem[Shi \& Malik(2000)Shi and Malik]{shi2000normalized}
Shi, J. and Malik, J.
\newblock Normalized cuts and image segmentation.
\newblock \emph{IEEE Transactions on Pattern Analysis and Machine
  Intelligence}, 22\penalty0 (8):\penalty0 888--905, 2000.

\bibitem[Sun et~al.(2013)Sun, Xiang, and Ye]{sun2013robust}
Sun, Q., Xiang, S., and Ye, J.
\newblock Robust principal component analysis via capped norms.
\newblock In \emph{Proceedings of the 19th International Conference on
  Knowledge Discovery and Data Mining}, pp.\  311--319. ACM, 2013.

\bibitem[Wald(2002)]{wald2002data}
Wald, L.
\newblock \emph{Data fusion: definitions and architectures: fusion of images of
  different spatial resolutions}.
\newblock Presses des MINES, 2002.

\bibitem[Wang et~al.(2004)Wang, Bovik, Sheikh, and Simoncelli]{wang2004image}
Wang, Z., Bovik, A.~C., Sheikh, H.~R., and Simoncelli, E.~P.
\newblock Image quality assessment: from error visibility to structural
  similarity.
\newblock \emph{IEEE Transactions on Image Processing}, 13\penalty0
  (4):\penalty0 600--612, 2004.

\bibitem[Xie et~al.(2016{\natexlab{a}})Xie, Zhao, Meng, Xu, Gu, Zuo, and
  Zhang]{xie2016multispectral}
Xie, Q., Zhao, Q., Meng, D., Xu, Z., Gu, S., Zuo, W., and Zhang, L.
\newblock Multispectral images denoising by intrinsic tensor sparsity
  regularization.
\newblock In \emph{Proceedings of the IEEE Conference on Computer Vision and
  Pattern Recognition}, pp.\  1692--1700, 2016{\natexlab{a}}.

\bibitem[Xie et~al.(2016{\natexlab{b}})Xie, Gu, Liu, Zuo, Zhang, and
  Zhang]{xie2016weighted}
Xie, Y., Gu, S., Liu, Y., Zuo, W., Zhang, W., and Zhang, L.
\newblock Weighted schatten $ p $-norm minimization for image denoising and
  background subtraction.
\newblock \emph{IEEE Transactions on Image Processing}, 25\penalty0
  (10):\penalty0 4842--4857, 2016{\natexlab{b}}.

\bibitem[Xu et~al.(2017)Xu, Zhang, Zhang, and Feng]{xu2017multi}
Xu, J., Zhang, L., Zhang, D., and Feng, X.
\newblock Multi-channel weighted nuclear norm minimization for real color image
  denoising.
\newblock In \emph{Proceedings of the IEEE International Conference on Computer
  Vision}, volume~2, 2017.

\bibitem[Xu et~al.(2018{\natexlab{a}})Xu, Li, Liang, Zhang, and
  Zhang]{xu2018real}
Xu, J., Li, H., Liang, Z., Zhang, D., and Zhang, L.
\newblock Real-world noisy image denoising: A new benchmark.
\newblock \emph{arXiv preprint arXiv:1804.02603}, 2018{\natexlab{a}}.

\bibitem[Xu et~al.(2018{\natexlab{b}})Xu, Zhang, and Zhang]{xu2018external}
Xu, J., Zhang, L., and Zhang, D.
\newblock External prior guided internal prior learning for real-world noisy
  image denoising.
\newblock \emph{IEEE Transactions on Image Processing}, 27\penalty0
  (6):\penalty0 2996--3010, 2018{\natexlab{b}}.

\bibitem[Yair \& Michaeli(2018)Yair and Michaeli]{yair2018multi}
Yair, N. and Michaeli, T.
\newblock Multi-scale weighted nuclear norm image restoration.
\newblock In \emph{Proceedings of the IEEE Conference on Computer Vision and
  Pattern Recognition}, pp.\  3165--3174, 2018.

\bibitem[Yasuma et~al.(2010)Yasuma, Mitsunaga, Iso, and
  Nayar]{yasuma2010generalized}
Yasuma, F., Mitsunaga, T., Iso, D., and Nayar, S.~K.
\newblock Generalized assorted pixel camera: postcapture control of resolution,
  dynamic range, and spectrum.
\newblock \emph{IEEE Transactions on Image Processing}, 19\penalty0
  (9):\penalty0 2241--2253, 2010.

\bibitem[Ye et~al.(2015)Ye, Qian, and Zhou]{ye2015multitask}
Ye, M., Qian, Y., and Zhou, J.
\newblock Multitask sparse nonnegative matrix factorization for joint
  spectral--spatial hyperspectral imagery denoising.
\newblock \emph{IEEE Transactions on Geoscience and Remote Sensing},
  53\penalty0 (5):\penalty0 2621--2639, 2015.

\bibitem[Yuhas et~al.(1993)Yuhas, Boardman, and Goetz]{yuhas1993determination}
Yuhas, R.~H., Boardman, J.~W., and Goetz, A.~F.
\newblock Determination of semi-arid landscape endmembers and seasonal trends
  using convex geometry spectral unmixing techniques.
\newblock \emph{In 4th Annual JPL Airborne Geoscience Workshop.}, Volume
  1:\penalty0 205--208, 1993.

\bibitem[Zhang et~al.(2012)Zhang, Hu, Ye, Li, and He]{zhang2012matrix}
Zhang, D., Hu, Y., Ye, J., Li, X., and He, X.
\newblock Matrix completion by truncated nuclear norm regularization.
\newblock In \emph{Proceedings of the IEEE Conference on Computer Vision and
  Pattern Recognition}, pp.\  2192--2199. IEEE, 2012.

\bibitem[Zhang et~al.(2014)Zhang, He, Zhang, Shen, and
  Yuan]{zhang2014hyperspectral}
Zhang, H., He, W., Zhang, L., Shen, H., and Yuan, Q.
\newblock Hyperspectral image restoration using low-rank matrix recovery.
\newblock \emph{IEEE Transactions on Geoscience and Remote Sensing},
  52\penalty0 (8):\penalty0 4729--4743, 2014.

\bibitem[Zhang et~al.(2017)Zhang, Zuo, Chen, Meng, and Zhang]{Zhang2017}
Zhang, K., Zuo, W., Chen, Y., Meng, D., and Zhang, L.
\newblock Beyond a gaussian denoiser: Residual learning of deep cnn for image
  denoising.
\newblock \emph{IEEE Transactions on Image Processing}, 26\penalty0
  (7):\penalty0 3142--3155, 2017.

\bibitem[Zhang et~al.(2018)Zhang, Zuo, and Zhang]{Zhang2018}
Zhang, K., Zuo, W., and Zhang, L.
\newblock Ffdnet: Toward a fast and flexible solution for cnn based image
  denoising.
\newblock \emph{IEEE Transactions on Image Processing}, 2018.

\bibitem[Zhang et~al.(2011)Zhang, Wu, Buades, and Li]{zhang2011color}
Zhang, L., Wu, X., Buades, A., and Li, X.
\newblock Color demosaicking by local directional interpolation and nonlocal
  adaptive thresholding.
\newblock \emph{Journal of Electronic Imaging}, 20\penalty0 (2):\penalty0
  023016, 2011.

\bibitem[Zoran \& Weiss(2011)Zoran and Weiss]{zoran2011learning}
Zoran, D. and Weiss, Y.
\newblock From learning models of natural image patches to whole image
  restoration.
\newblock In \emph{Proceedings of the IEEE International Conference on Computer
  Vision}, pp.\  479--486. IEEE, 2011.

\bibitem[Zuo et~al.(2013)Zuo, Meng, Zhang, Feng, and Zhang]{zuo2013generalized}
Zuo, W., Meng, D., Zhang, L., Feng, X., and Zhang, D.
\newblock A generalized iterated shrinkage algorithm for non-convex sparse
  coding.
\newblock In \emph{Proceedings of the IEEE Conference on Computer Vision and
  Pattern Recognition}, pp.\  217--224, 2013.

\end{thebibliography}
\bibliographystyle{icml2019}

\end{document}